\crefname{assumption}{Assumption}{Assumptions}
\crefname{example}{Example}{Examples}
\crefname{lemma}{Lemma}{Lemmata}
\crefname{corollary}{Corollary}{Corollaries}
\crefname{proposition}{Proposition}{Propositions}
\crefname{remark}{Remark}{Remarks}
\crefname{definition}{Definition}{Definitions}
\newcolumntype{R}{>{\raggedright\arraybackslash}X}
\newcolumntype{C}{>{\centering\arraybackslash}X}
\newcolumntype{L}{>{\raggedleft\arraybackslash}X}
\definecolor{internationalkleinblue}{rgb}{0.0, 0.18, 0.65}
\newcommand{\revision}[1]{{\color{magenta}#1}}
\renewcommand{\revision}[1]{#1}
\newcommand{\comment}[1]{}
\newcommand{\R}{\mathbb{R}}
\newcommand{\N}{\mathbb{N}}
\DeclareMathOperator*{\argmin}{arg\,min}
\newcommand{\prox}[1]{\,\mathrm{prox}_{#1}}
\newcommand{\abs}[1]{\left\vert#1\right\vert}
\newcommand{\norm}[1]{\Vert#1\Vert}
\newcommand{\st}{\,:\,}
\DeclareMathOperator*{\sign}{sign}
\newcommand{\eps}{\varepsilon}
\DeclareMathOperator{\dom}{dom}
\newcommand{\param}{\theta}
\newcommand{\net}{f}     
\newcommand{\inp}{x}
\newcommand{\oup}{y}
\newcommand{\Inp}{\mathcal{X}}
\newcommand{\Oup}{\mathcal{Y}}
\newcommand{\Param}{\Theta}
\newcommand{\trSet}{\mathcal{T}}
\newcommand{\loss}{\ell}
\newcommand{\empBatchLoss}{L}
\newcommand{\empLoss}{\mathcal{L}}
\newcommand{\E}{\mathbb{E}}
\newcommand{\Exp}[1]{\E\left[#1\right]}
\newcommand{\Var}[1]{\mathrm{Var}\left[#1\right]}
\renewcommand{\P}{\mathbb{P}}
\newcommand{\func}{J}
\newcommand{\sg}{p}
\renewcommand{\d}{\mathrm{d}}
\newcommand{\LinBreg}{\mbox{LinBreg}}
\newcommand{\AdaBreg}{\mbox{AdaBreg}}
\newtheorem{assumption}{Assumption}
\numberwithin{equation}{section}
\begin{document}

\title{A Bregman Learning Framework for\\ Sparse Neural Networks}

\author{\name Leon Bungert \email leon.bungert@hcm.uni-bonn.de  \\
        \addr Hausdorff Center for Mathematics\\
        University of Bonn\\
        Endenicher Allee 62, Villa Maria
        53115 Bonn, Germany 
        \AND
        \name Tim Roith \email tim.roith@fau.de \\
        \addr Department of Mathematics\\
        Friedrich--Alexander University Erlangen--Nürnberg\\
        Cauerstraße 11, 91058 Erlangen, Germany
        \AND
        \name Daniel Tenbrinck \email daniel.tenbrinck@fau.de\\
        \addr Department of Mathematics\\
        Friedrich--Alexander University Erlangen--Nürnberg\\
        Cauerstraße 11, 91058 Erlangen, Germany
        \AND
        \name Martin Burger \email martin.burger@fau.de\\
        \addr Department of Mathematics\\
        Friedrich--Alexander University Erlangen--Nürnberg\\
        Cauerstraße 11, 91058 Erlangen, Germany
       }

\editor{}

\maketitle

\begin{abstract}
We propose a learning framework based on stochastic Bregman iterations\revision{, also known as mirror descent,} to train sparse neural networks with an inverse scale space approach.
We derive a baseline algorithm called \emph{LinBreg}, an accelerated version using momentum, and \emph{AdaBreg}, which is a Bregmanized generalization of the \emph{Adam} algorithm.
In contrast to established methods for sparse training the proposed family of algorithms constitutes a regrowth strategy for neural networks that is solely optimization-based without additional heuristics. 
Our Bregman learning framework starts the training with very few initial parameters, successively adding only significant ones to obtain a sparse and expressive network.
The proposed approach is extremely easy and efficient, yet supported by the rich mathematical theory of inverse scale space methods.
We derive a statistically profound sparse parameter initialization strategy and provide a rigorous stochastic convergence analysis of the loss decay and additional convergence proofs in the convex regime.
Using only $3.4\%$ of the parameters of ResNet-18 we achieve $90.2\%$ test accuracy on CIFAR-10, compared to $93.6\%$ using the dense network.
Our algorithm also unveils an autoencoder architecture for a denoising task.
The proposed framework also has a huge potential for integrating sparse backpropagation and resource-friendly training.
\end{abstract}

\begin{keywords}
  Bregman Iterations, Sparse Neural Networks, Sparsity, Inverse Scale Space, Optimization
\end{keywords}

\section{Introduction}
Large and deep neural networks have shown astonishing results in challenging applications, ranging from real-time image classification in autonomous driving, over assisted diagnoses in healthcare, to surpassing human intelligence in highly complex games \citep{amato2013artificial,rawat2017deep,silver2016mastering}.
The main drawback of many of these architectures is that they require huge amounts of memory and can only be employed using specialised hardware, like GPGPUs and TPUs.
This makes them inaccessible to normal users with only limited computational resources on their mobile devices or computers \citep{hoefler2021sparsity}. 
Moreover, the carbon footprint of training large networks has become an issue of major concern recently \citep{dhar2020carbon}, hence calling for resource-efficient methods.

The success of large and deep neural networks is not surprising as it has been predicted by universal approximation theorems \citep{cybenko1989approximation,lu2017expressive}, promising a smaller error with increasing number of neurons and layers.
Besides the increase in computational complexity, each neuron added to the network architecture also adds to the amount of free parameters and local optima of the loss.

Consequently, a significant branch of modern research aims for training ``sparse neural networks'', which has lead to different strategies, based on neglecting small parameters or such with little influence on the network output, see \citet{hoefler2021sparsity} for an extensive review.
Apart from computational and resource efficiency, sparse training also sheds light on neural architecture design and might answer the question why certain architectures work better than others.

A popular approach for generating sparse neural networks are pruning techniques \citep{lecun1990optimal,han2015learning}, which have been developed to sparsify a dense neural network during or after training by dropping dispensable neurons and connections.
Another approach, which is based on the classical Lasso method from compressed sensing \citep{tibshirani1996regression}, incorporates $\ell_1$ regularization into the training problem, acting as convex relaxation of sparsity-enforcing $\ell_0$ regularization.
These endeavours are further supported by the recently stated ``lottery ticket hypothesis'' \citep{frankle2018lottery}, which postulates that dense, feed-forward networks contain sub-networks with less neurons that, if trained in isolation, can achieve the same test accuracy as the original network.

An even more intriguing idea is ``grow-and-prune'' \citep{dai2019nest}, which starts with a sparse network and augments it during training, while keeping it as sparse as possible.
To this end new neurons are added, e.g., by splitting overloaded neurons into new specimen or using gradient-based indicators, while insignificant parameters are set to zero by thresholding.

Many of the established methods in the literature are bound to specific architectures, e.g., fully-connected feedforward layers \citep{castellano1997iterative,liu2021}.
In this paper we propose a more conceptual and optimization-based approach.
The idea is to mathematically follow the intuition of starting with very few parameters and adding only necessary ones in an inverse scale space manner, see \cref{fig:kernels} for an illustration of our algorithm on a convolutional neural network.
For this sake we propose a Bregman learning framework utilizing linearized Bregman iterations---originally introduced for compressed sensing by \citet{yin2008bregman}---for training sparse neural networks.

Our \textbf{main contributions} are the following:
\begin{itemize}
\setlength\itemsep{1pt}
    \item We derive an extremely simple and efficient algorithm for training sparse neural networks, called \emph{\LinBreg{}}.
    \item We also propose a momentum-based acceleration and \emph{\AdaBreg{}}, which utilizes the Adam algorithm \citep{kingma2014adam}.
    \item We perform a rigorous stochastic convergence analysis of \LinBreg{} \revision{for strongly convex losses, in infinite dimensions, and without any smoothness assumptions on $\func$.}
    \item We propose a sparse initialization strategy for the network parameters.
    \item We show that our algorithms are effective for training sparse neural networks and show their potential for architecture design by unveiling a denoising autoencoder.
\end{itemize}

\begin{figure}[tb]
\def\PicWidth{0.24\textwidth}%
\centering%
\begin{subfigure}[b]{\PicWidth}
\includegraphics[width=\textwidth,trim=0.35cm 0.35cm 0.2cm 0.35cm,clip]{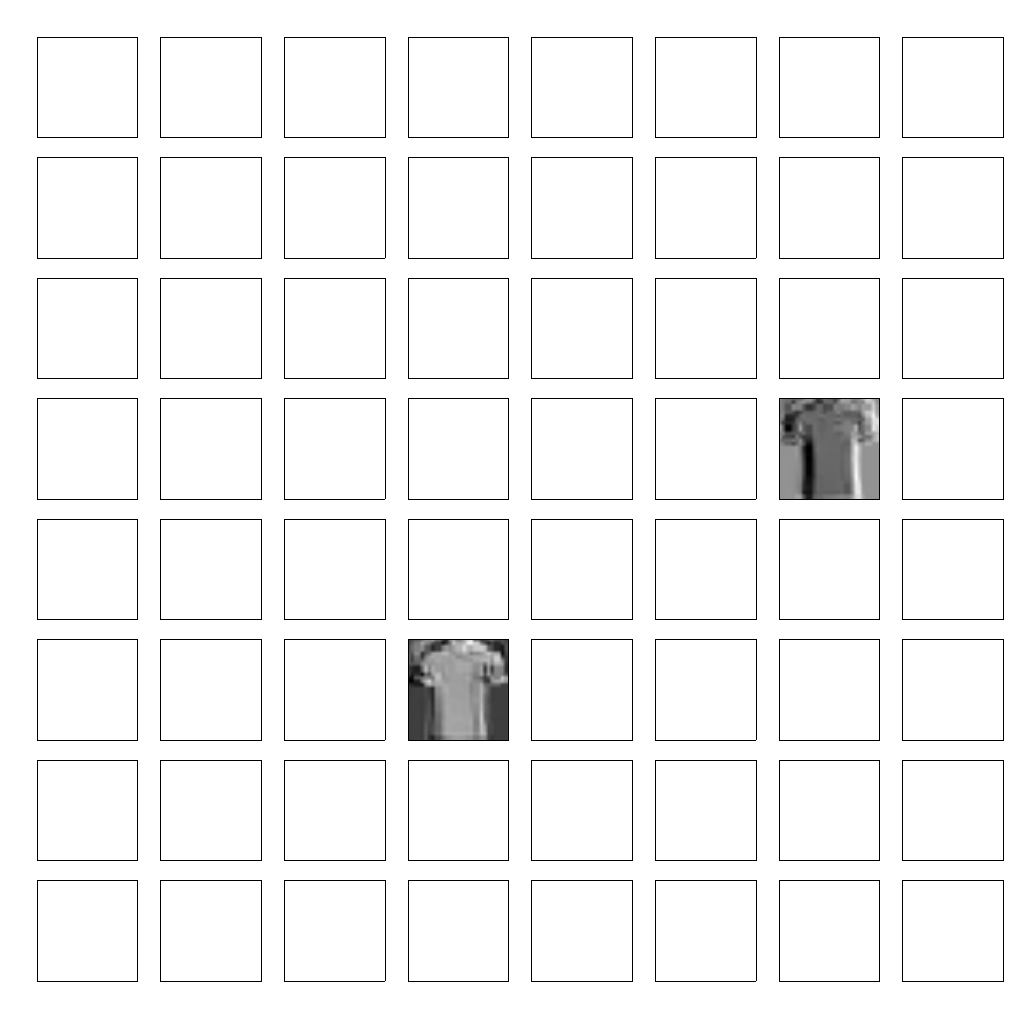}%
\caption{Iteration 0}
\end{subfigure}
\hfill%
\begin{subfigure}[b]{\PicWidth}
\includegraphics[width=\textwidth,trim=0.35cm 0.35cm 0.2cm 0.35cm,clip]{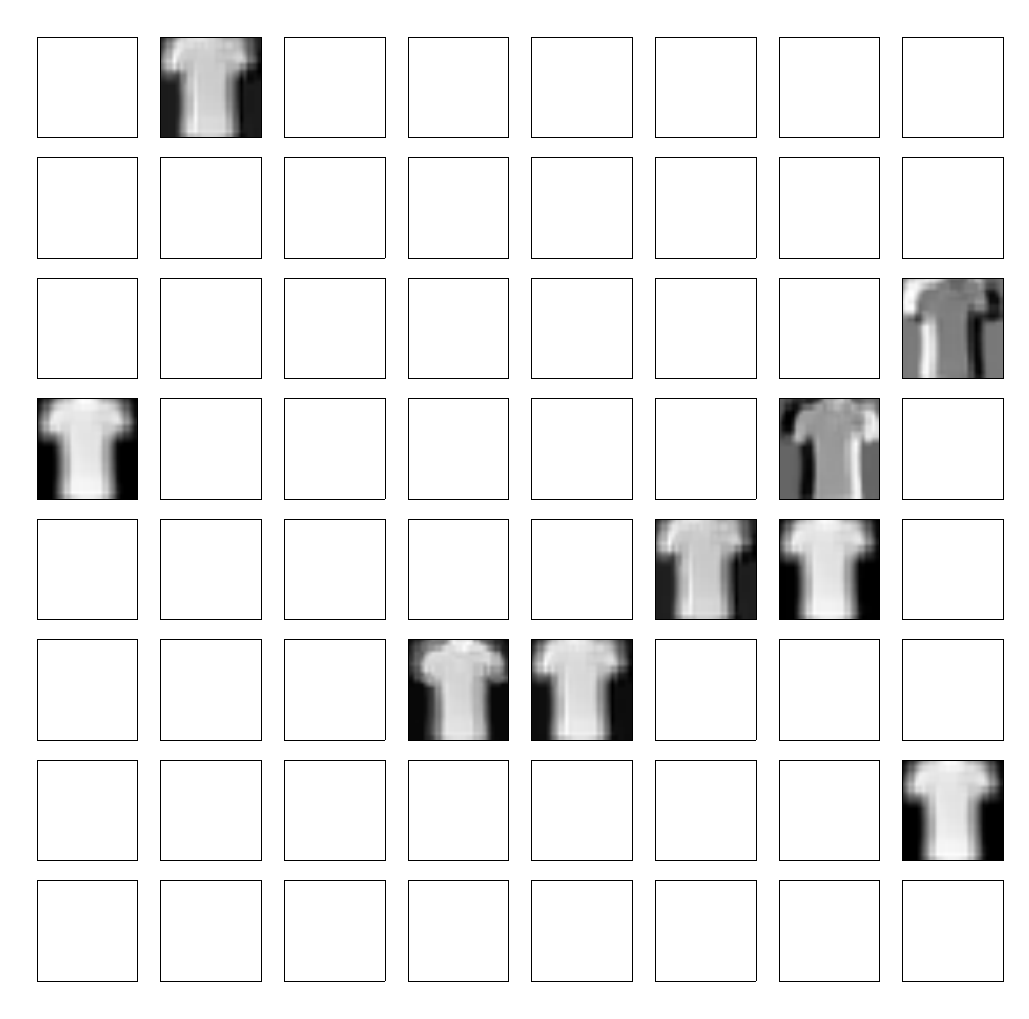}%
\caption{Iteration 5}
\end{subfigure}
\hfill%
\begin{subfigure}[b]{\PicWidth}
\includegraphics[width=\textwidth,trim=0.35cm 0.35cm 0.2cm 0.35cm,clip]{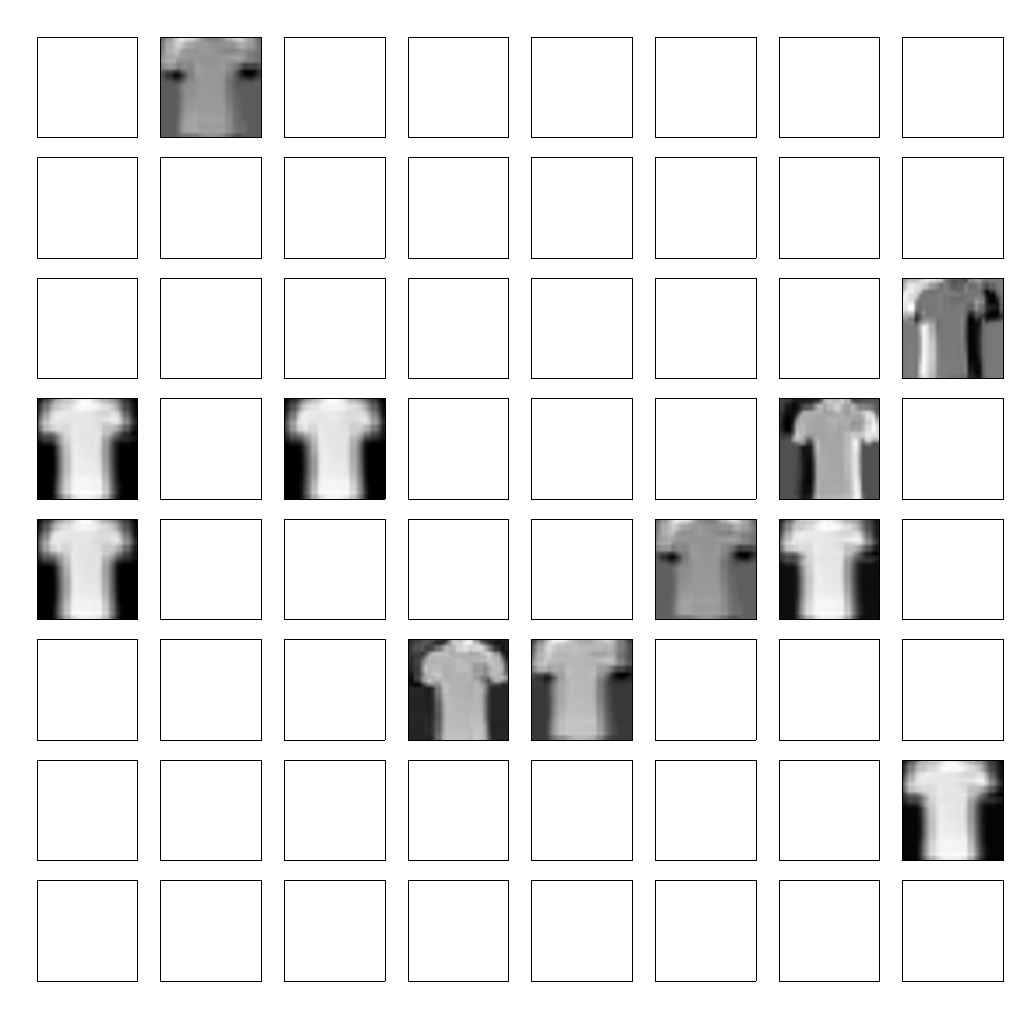}%
\caption{Iteration 20}
\end{subfigure}
\hfill%
\begin{subfigure}[b]{\PicWidth}
\includegraphics[width=\textwidth,trim=0.35cm 0.35cm 0.2cm 0.35cm,clip]{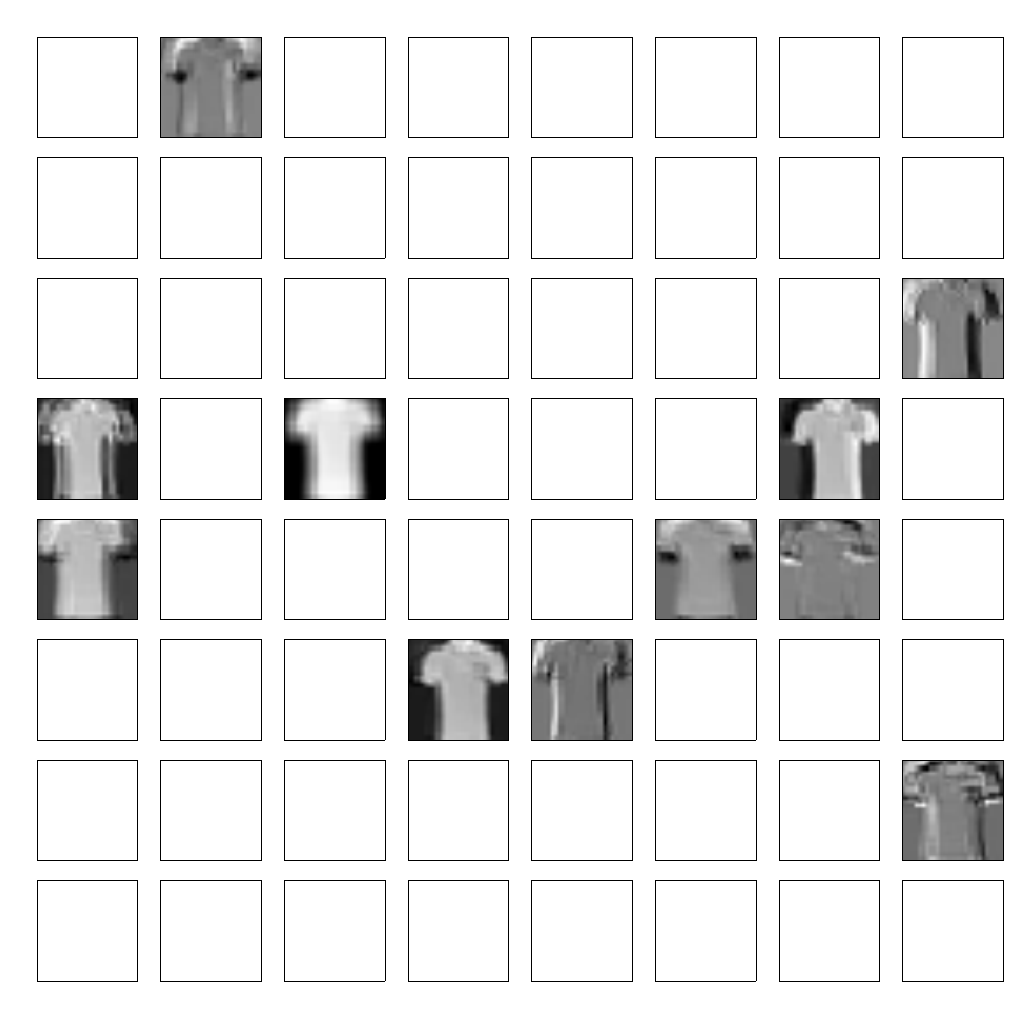}%
\caption{Iteration 100}
\end{subfigure}
\caption{Inverse scale space character of \LinBreg{} visualized through feature maps of a convolutional neural network. 
Descriptive kernels are gradually added in the training process.}
\label{fig:kernels}
\vspace{-10pt}
\end{figure}

The structure of this paper is as follows:
In \cref{sec:nutshell} we explain our baseline algorithm \emph{LinBreg} in a nutshell and in \cref{sec:related_work} we discuss related work.
\cref{sec:prelim_nn,sec:prelim_conv_ana} clarify notation and collect preliminaries on neural networks and convex analysis, the latter being important for the derivation and analysis of our algorithms.
In \cref{sec:bregman_training} we explain how Bregman iterations can be incorporated into the training of sparse neural networks, derive and discuss variants of the proposed Bregman learning algorithm, including accelerations using momentum and Adam.
We perform a mathematical analysis for stochastic linearized Bregman iterations in \cref{sec:analysis} and discuss conditions for convergence of the loss function and the parameters.
In \cref{sec:experiments} we first discuss our statistical sparse initialization strategy and then evaluate our algorithms on benchmark data sets (MNIST, Fashion-MNIST, CIFAR-10) using feedforward, convolutional, and residual neural networks.

\subsection{The Bregman Training Algorithm in a Nutshell}
\label{sec:nutshell}
\begin{algorithm}[t!]
\def\commentWidth{5cm}
\newcommand{\atcp}[1]{\tcp*[r]{\makebox[\commentWidth]{#1\hfill}}}
\setstretch{1.2}
\DontPrintSemicolon
\SetKwInOut{Input}{input}\SetKwInOut{Output}{output}
\SetKwInOut{Default}{default}\SetKwInOut{Default}{default}
\revision{\Default{$\delta=1$}}
$\param\gets$ \cref{sec:initialization},\quad $v \gets \partial\func(\param) + \frac{1}{\delta}\param$ \atcp{initialize}
\For{\upshape{epoch} $e = 1$ \KwTo $E$}{
\For{\upshape{minibatch} $B\subset \trSet$}{
$g \gets \nabla\empBatchLoss(\param;B)$ \atcp{Backpropagation}
$v \gets v - \tau g$ \atcp{Gradient step}
$\param \gets \prox{\delta\func}\left(\delta v\right)$ \atcp{Regularization}
}
}
\caption{\emph{\LinBreg{}}, an inverse scale space algorithm for training sparse neural networks by successively adding weights whilst minimizing the loss.
The functional $\func$ is sparsity promoting, e.g., the $\ell_1$-norm.
}
\label{alg:proximal_bregman_training}
\end{algorithm}
\cref{alg:proximal_bregman_training} states our baseline algorithm \emph{\LinBreg{}} for training sparse neural networks with an inverse scale space approach. 
Mathematical tools and derivations of \LinBreg{} and its variants \emph{\LinBreg{} with momentum} (\cref{alg:proximal_bregman_training_momentum}) and \emph{\AdaBreg{}} (\cref{alg:proximal_bregman_training_adam}), a generalization of \emph{Adam} \citep{kingma2014adam}, are presented in \cref{sec:bregman_training}; 
a convergence analysis is provided in \cref{sec:analysis}.

\LinBreg{} can easily be applied to any neural network architecture $\net_\param$, parametrized with parameters $\param\in\Param$, using a set of training data $\trSet$, and an empirical loss function $\empBatchLoss(\param;B)$, where $B\subset\trSet$ is a batch of training data.
\LinBreg{}'s most important ingredient is a sparsity enforcing functional $\func:\Param\to(-\infty,\infty]$, which acts on groups of network parameters as, for instance, convolutional kernels, weight matrices, biases, etc.
Following \citet{scardapane2017group} and denoting the collection of all parameter groups for which sparsity is desired by $\mathcal{G}$, two possible regularizers which induce sparsity or group sparsity, respectively, can be defined as
\begin{alignat}{2}
    \label{eq:1-norm}
    \func(\param) &= \lambda\sum_{\mathbf{g}\in\mathcal{G}} \norm{\mathbf{g}}_1,\qquad &&\text{the $\ell_1$-norm},\\
    \label{eq:1-2-norm}
    \func(\param) &= \lambda\sum_{\mathbf{g}\in\mathcal{G}} \sqrt{n_{\mathbf{g}}}\,\norm{\mathbf{g}}_2,\qquad &&\text{the group $\ell_{1,2}$-norm}.
\end{alignat}
Here $\lambda>0$ is a parameter controlling the regularization strength, 
$n_{\mathbf{g}}$ denotes the number of elements in $\mathbf{g}$, and the factor $\sqrt{n_\mathbf{g}}$ ensures a uniform weighting of all groups \citep{scardapane2017group}.

\LinBreg{} uses two variables $v$ and $\param$, coupled through the condition that $v\in\partial\func_\delta(\param)$ is a subgradient of the \emph{elastic net regularization}  $\func_\delta(\param):=\func(\param)+\tfrac{1}{2\delta}\norm{\param}^2$ introduced by \citet{zou2005regularization} (see \cref{sec:prelim_nn,sec:prelim_conv_ana} for definitions).
The algorithm successively updates $v$ with gradients of the loss and recovers sparse parameters $\param$ by applying a proximal operator.
For instance, if $\func(\param)=\lambda\norm{\param}_1$ equals the $\ell_1$-norm, the proximal operator in \cref{alg:proximal_bregman_training} coincides with the soft shrinkage operator:
\begin{align}
    \prox{\delta\func}(\delta v) = \delta\operatorname{shrink}(v;\lambda) := \delta\sign(v)\max(|v|-\lambda,0).
\end{align}
In this case only those parameters $\param$ will be non-zero whose subgradients $v$ have magnitude larger than the regularization parameter $\lambda$.
Furthermore, $\delta>0$ only steers the magnitude of the resulting weights and not their support. 
\revision{Furthermore, if $\func(\param)=0$ then $\prox{\delta\func}(\delta v)=\delta v$ and therefore \cref{alg:proximal_bregman_training} coincides with stochastic gradient descent (SGD) with learning rate $\delta\tau$.
These two observations explain our default choice of $\delta=1$.}

In general, the proximal operators of the regularizers above can be efficiently evaluated since they admit similar closed form solutions based on soft thresholding.
Hence, the computational complexity of \LinBreg{} is dominated by the backpropagation and coincides with the complexity of vanilla stochastic gradient descent.
However, note that our framework has great potential for complexity reduction via sparse backpropagation methods, cf. \citet{dettmers2019sparse}.

The special feature which tells \LinBreg{} apart from standard sparsity regularization \citep{louizos2017learning,scardapane2017group,srinivas2017training} or pruning \citep{lecun1990optimal,han2015learning} is its inverse scale space character.
\LinBreg{} is derived based on Bregman iterations, originally developed for scale space approaches in imaging \citep{osher2005iterative,burger2006nonlinear,yin2008bregman,cai2009linearized,cai2009convergence,zhang2011unified}.
Instead of removing weights from a dense trained network, it starts from a very sparse initial set of parameters (see \cref{sec:initialization}) and successively adds non-zero parameters whilst minimizing the loss.

\subsection{Related Work}\label{sec:related_work}

\paragraph{Dense-to-Sparse Training}

A well-established approach for training sparse neural network consists in solving the regularized empirical risk minimization
\begin{align}\label{eq:reg_emp_risk}
    \min_{\param\in\Param} L(\param;B) + \func(\param),
\end{align}
where $\func$ is a (sparsity-promoting) non-smooth regularization functional.
If $\func$ equals the $\ell_1$-norm this is referred to as \emph{Lasso} \citep{tibshirani1996regression} and was extended to \emph{Group Lasso} for neural networks by \citet{scardapane2017group} by using group norms.
We refer to \citet{de2020sparsity} for a mean-field analysis of this approach.
The regularized risk minimization \labelcref{eq:reg_emp_risk} is a special case of Dense-to-Sparse training.
Even if the network parameters are initialized sparsely, any optimization method for \labelcref{eq:reg_emp_risk} will instantaneously generate dense weights, which are subsequently sparsified.
A different strategy for Dense-to-Sparse training is \emph{pruning} \citep{lecun1990optimal,han2015learning}, see also \citet{zhu2017prune}, which first trains a network and then removes parameters to create sparse weights.
This procedure can also be applied alternatingly, which is referred to as \emph{iterative pruning} \citep{castellano1997iterative}.
The weight removal can be achieved based on different criteria, e.g., their magnitude or their influence on the network output.

\paragraph{Sparse-to-Sparse Training}
In contrast, Sparse-to-Sparse training aims to grow a neural network starting from a sparse initialization until it is sufficiently accurate. 
This is also the paradigm of our \LinBreg{} algorithm, generating an inverse sparsity scale space. 
Other approaches from literature are grow-and-prune strategies \citep{mocanu2018scalable,dettmers2019sparse,dai2019nest,liu2021,Evci2020} which, starting from sparse networks, successively add and remove neurons or connections while training the networks.

\paragraph{Proximal Gradient Descent}

A related approach to \LinBreg{} is \emph{proximal gradient descent} (ProxGD) for optimizing the regularized empirical risk minimization \labelcref{eq:reg_emp_risk}, which is an inherently non-smooth optimization problem due to the presence of the $\ell_1$-norm-type functional $\func$.
Therefore, proximal gradient descent alternates between a gradient step of the loss with a proximal step of the regularization:
\begin{subequations}\label{eq:proxGD}
\begin{align}
    g &\gets \nabla \empBatchLoss(\param;B) \\
    \param &\gets \param - \tau g \\
    \param &\gets \prox{\tau\func}(\param).
\end{align}
\end{subequations}
Applications for training neural networks and convergence analysis of this algorithm and its variants can be found, e.g., in \citet{nitanda2014stochastic,rosasco2014convergence,reddi2016proximal,yang2019proxsgd,yun2020general}.
It differs from \cref{alg:proximal_bregman_training} by the lack of a subgradient variable and by using the learning rate $\tau$ within the proximal operator.
These seemingly minor algorithmic differences cause major differences for the trained parameters.
Indeed, the effect of $\func$ kicks in only after several iterations when the proximal operator has been applied sufficiently often to set some parameters to zero, \revision{as can be observed in \cref{fig:fc_net} below.}
Furthermore, proximal gradient descent does not decrease the loss monotonously which we are able to prove for \LinBreg{}.

\paragraph{Bregman Iterations}
Bregman iterations and in particular linearized Bregman iterations have been introduced and thoroughly analyzed for sparse regularization approaches in imaging and compressed sensing (see, e.g., \citet{osher2005iterative,yin2008bregman,bachmayr2009iterative,cai2009linearized,cai2009convergence,yin2010analysis,burger2007inverse,burger2013adaptive}).
\revision{More recent applications of Bregman type methods in the context of image restoration are \citet{benfenati2013inexact,jia2016image,li2018adaptive}.}
Linearized Bregman iterations for non-convex problems, which appear in machine learning and imaging applications like blind deblurring, have first been analyzed by \citet{bachmayr2009iterative,benning2018modern,benning2018choose}.
\citet{benning2018modern} also showed that linearized Bregman iterations for convex problems can be formulated as forward pass of a neural network.
\citet{benning2018choose} applied them for training neural networks with low-rank weight matrices, using nuclear norm regularization.
\citet{huang2016split} suggested a split Bregman approach for training sparse neural networks and \citet{fu2019exploring} provided a deterministic convergence result along the lines of \citet{benning2018choose}.
A recent analysis of \revision{Bregman stochastic gradient descent, which is the same as linearized Bregman iterations,} however using strong regularity assumptions on the involved functions, is done by \citet{dragomir2021fast}.

\paragraph{Mirror Descent}
\revision{%
As it turns out, linearized Bregman iterations are largely known under yet another name: \emph{mirror descent}. 
This method was first proposed by \citet{nemirovskij1983problem} and related to Bregman distances by \citet{beck2003mirror}.
\citet{dragomir2021fast} present a literature overview of stochastic mirror descent.
Some months after the release of the preprint version of the present article, \citet{dorazio2021stochastic} presented a convergence analysis of stochastic mirror descent a.k.a. Bregman iterations, using a weaker bounded variance condition for the stochastic gradients, albeit working in a smooth setting.
In contrast, our analysis does not require any smoothness of $\func$.}

\subsection{Preliminaries on Neural Networks}
\label{sec:prelim_nn}
We denote neural networks, which map from an input space $\Inp$ to an output space $\Oup$ and have parameters in some parameter space $\Param$, by
\begin{align}
    \net_\param:\Inp\to\Oup, \quad\param\in\Param.
\end{align}
In principle $\Inp$, $\Oup$, and $\Param$ can be infinite-dimensional and we only assume that $\Param$ is a Hilbert space, equipped with an inner product $\langle\tilde\param,\param\rangle$ and associated norm $\norm{\param}=\sqrt{\langle\param,\param\rangle}$.
Given a set of training pairs $\trSet\subset\Inp\times\Oup$ and a loss function $\loss:\Oup\times\Oup\to\R$ we denote the empirical loss associated to the training data by
\begin{align}
\empLoss(\param) := \frac{1}{|\trSet|}\sum_{(\inp,\oup)\in \trSet}\loss(\net_\param(\inp),\oup).
\end{align}
The empirical risk minimization approach to finding optimal parameters $\param\in\Param$ of the neural network $\net_\param$ then consists in solving
\begin{align}\label{eq:empir_risk_min}
    \min_{\param\in\Param} \empLoss(\param).
\end{align}
If one assumes that the training set $\trSet$ is sampled from some probability measure $\rho$ on the product space $\Inp\times\Oup$, the empirical risk minimization is an approximation of the infeasible population risk minimization 
\begin{align}
    \min_{\param\in\Param} \int_{\Inp\times\Oup} \loss(\net_\param(\inp),\oup)~\d\rho(x,y).
\end{align}
One typically samples batches $B\subset\trSet$ from the training set and replaces $\empLoss(\param)$ by the empirical risk of the batch
\begin{align}\label{eq:emp_loss}
    \empBatchLoss(\theta;B):=\frac{1}{|B|}\sum_{(\inp,\oup)\in B}\loss(\net_\param(\inp),\oup),
\end{align}
which is utilized in \emph{stochastic} gradient descent methods.

For a feed-forward architecture with $L\in\N$ layers of sizes $n_l$ 
we split the variable $\param$ into weights and 
biases $W^l\in\R^{n_{l},n_{l-1}}$, $b^l\in\R^{n_{l}}$ for 
$l\in\{1,\ldots, L\}$. 
In this case we have
\begin{align}\label{eq:L-layer_net}
\net_\param(\inp) = \Phi^L \circ \dots \circ \Phi^1(x),
\end{align}
where the $l$-th layer for $l\in\{1,\dots,L\}$ is given by 
\begin{align}
\Phi^{l}(z) :=\sigma^l(W^l z + b^l).
\end{align}
Here $\sigma^l$ denote activation functions, as for instance ReLU, TanH, Sigmoid, etc., \citep{Goodfellow16}.
In this case, sparsity promoting regularizers are the $\ell_1$-norm or the group $\ell_{1,2}$-norm
\begin{align}
    \label{eq:1_norm_ffwd}
    \func(\param) &= \lambda\sum_{l=1}^L \norm{W^l}_{1,1},\\
    \label{eq:1-2_norm_ffwd}
    \func(\param) &= \lambda\sum_{l=1}^L \sqrt{n_{l-1}}~\norm{W^l}_{1,2},
\end{align}
which induce sparsity of the weight matrices and of the non-zero rows of weight matrices, respectively. 
Here the scaling $\sqrt{n_{l-1}}$ weighs the influence of the $l$-th layer based on the number of incoming neurons.

\subsection{Preliminaries on Convex Analysis}
\label{sec:prelim_conv_ana}
In this section we introduce some essential concepts from convex analysis which we need to derive \LinBreg{} and its variants and in order to make our argumentation more self-contained. \revision{For an overview of these topics we refer to \citet{benning2018modern, rockafellar1997convex, bauschke2011convex}.}
\revision{A functional $\func:\Param\to(-\infty,\infty]$ on a Hilbert space $\Param$ is called convex if 
\begin{align}
    \func(\lambda\overline{\param}+(1-\lambda)\param)\leq
    \lambda\func(\overline{\param})
    +(1-\lambda)\func(\param),\quad\forall\lambda\in[0,1],\,\overline{\param},\param\in\Param.
\end{align}
We define the effective domain of $\func$ as $\dom(\func):=\{\param\in\Param\st\func(\param)\neq\infty\}$ and call $\func$ proper if $\dom(\func)\neq\emptyset$.
Furthermore, $\func$ is called lower semicontinuous if $\func(u)\leq\liminf_{n\to\infty}\func(u_n)$ holds for all sequences $(u_n)_{n\in\N}\subset\Param$ converging to $u$.
}
First, we define the subdifferential of a convex \revision{and proper} functional $\func:\Param\to(-\infty,\infty]$ at a point $\param\in\Param$ as
\begin{align}
\label{eq:subgrad}
    \partial\func(\param) := \left\lbrace \sg\in\Param \st \func(\param) + \langle\sg,\overline{\param}-\param \rangle \leq \func(\overline{\param}),\;\forall\overline{\param}\in\Param\right\rbrace.
\end{align}
The subdifferential is a non-smooth generalization of the derivative and coincides with the classical gradient (or Fr\'echet derivative) if $\func$ is differentiable.
\revision{We denote $\dom(\partial\func):=\{\param\in\Param\st\partial\func(\param)\neq\emptyset\}$ and observe that $\dom(\partial\func)\subset\dom(\func)$.}

Next, we define the Bregman distance of two points $\param\in\dom(\partial\func),\overline{\param}\in\Param$ with respect to a convex \revision{and proper} functional $\func:\Param\to(-\infty,\infty]$ as
\begin{align}\label{eq:bregman_distance}
    D^\sg_\func(\overline{\param},\param) := \func(\overline \param)-\func({\param})-\langle\sg,\overline\param-{\param}\rangle,\quad\sg\in\partial\func(\param).
\end{align}
The Bregman distance can be interpreted as the distance between the linearization of $\func$ at $\param$ and its graph and hence somewhat measures the degree of linearity of the functional.
Note furthermore that the Bregman distance \labelcref{eq:bregman_distance} is neither definite, symmetric nor fulfills the triangle inequality, hence it is not a metric. 
However, it fulfills the two distance axioms
\begin{align}
    D^\sg_\func(\overline{\param},\param) \geq 0,\quad D^\sg_\func(\param,\param)=0,\quad\forall\overline{\param}\in\Param,\param\in\dom(\partial\func).
\end{align}
By summing up two Bregman distances, one can also define the symmetric Bregman distance with respect to $\sg\in\partial\func({\param})$ and $\overline\sg\in\partial\func(\overline{\param})$ as
\begin{align}
    D^\mathrm{sym}_\func(\overline{\param},\param) := D^\sg_\func(\overline{\param},\param) + D^{\overline{\sg}}_\func(\param,\overline\param).
\end{align}
Here, we suppress the dependency on $\sg$ and $\overline{\sg}$ to simplify the notation.

Last, we define the proximal operator of a \revision{convex, proper and lower semicontinuous functional} $\func:\Param\to(-\infty,\infty]$ as
\begin{align}
    \prox{\func}(\overline{\param}) := \argmin_{\param\in\Param} \frac{1}{2}\norm{\param-\overline{\param}}^2 + \func(\param).
\end{align}
Proximal operators are a key concept in non-smooth optimization since they can be used to replace gradient descent steps of non-smooth functionals, as done for instance in proximal gradient descent \labelcref{eq:proxGD}.
Obviously, given some $\overline{\param}\in\Param$ the proximal operator outputs a new element $\param\in\Param$ which has a smaller value of $\func$ whilst being close to the previous element $\overline{\param}$.

\section{Bregmanized training of Neural Networks}
\label{sec:bregman_training} 

In this section we first give a short overview of inverse scale space flows which are the time-continuous analogue of our algorithms. 
Subsequently, we derive \emph{\LinBreg{}} (\cref{alg:proximal_bregman_training}) by passing from Bregman iterations to linearized Bregman iterations, which we then reformulate in a very easy and compact form.
We then derive \emph{\LinBreg{} with momentum} (\cref{alg:proximal_bregman_training_momentum}) by discretizing a second-order in time inverse scale space flow and propose \emph{AdaBreg} (\cref{alg:proximal_bregman_training_adam}) as a generalization of the popular Adam algorithm \citep{kingma2014adam}.

\subsection{Inverse Scale Space Flows (with Momentum)}
\label{sec:iss_flows}

In the following we discuss the inverse scale space flow, which arises as gradient flow of a loss functional $\empLoss$ with respect to the Bregman distance \labelcref{eq:bregman_distance}.
In particular, it couples the minimization of $\empLoss$ with a simultaneous regularization through $\func$.
To give meaning to this, one considers the following implicit Euler scheme
\begin{subequations}\label{eq:bregman_iteration}
\begin{align}
    \param^{(k+1)} &= \argmin_{\param\in\Param} D^{\sg^{(k)}}_\func(\param,\param^{(k)}) + \tau^{(k)}\empLoss(\param), \\
    \sg^{(k+1)} &= \sg^{(k)} - \tau^{(k)}\nabla\empLoss(\param^{(k+1)}) \in \partial \func(\param^{(k+1)})
\end{align}
\end{subequations}
which is know as \emph{Bregman iteration}. 
Here, $\param^{(k)}$ is the previous iterate with subgradient $\sg^{(k)}\in\partial \func(\param^{(k)})$, and $\tau^{(k)}>0$ is a sequence of time steps.
Note that the subgradient update in the second line of \labelcref{eq:bregman_iteration} coincides with the optimality conditions of the first line.

The time-continuous limit of \labelcref{eq:bregman_iteration} as $\tau^{(k)}\to 0$ is the inverse scale space flow
\begin{align}\label{eq:iss_0}
    \begin{cases}
        \dot{\sg}_t = - \nabla\empLoss(\param_t), \\
        \sg_t \in \partial \func(\param_t),
    \end{cases}
\end{align}
see \citet{burger2006nonlinear,burger2007inverse} for a rigorous derivation in the context of image denoising.

If $\func(\param)=\frac{1}{2}\norm{\param}^2$ then $\partial\func(\param)=\param$ and \labelcref{eq:iss_0} coincides with the standard gradient flow
\begin{align}
    \dot{\param}_t = -\nabla\empLoss(\param_t).
\end{align}
Hence, the inverse scale space is a proper generalization of the gradient flow and allows for regularizing the path along which the loss is minimized using $\func$ (see \citet{benning2018choose}). 
For strictly convex loss functions this might seem pointless since they have a unique minimum anyways, however, for merely convex or even non-convex losses the inverse scale space allows to `select' (local) minima with desirable properties.

In this paper, we also propose an inertial version of \labelcref{eq:iss_0} which depends on an inertial parameter $\gamma\geq 0$ and takes the form
\begin{align}\label{eq:iss_inertial}
    \begin{cases}
        \gamma\ddot{\sg}_t + \dot{\sg}_t = - \nabla\empLoss(\param_t), \\
        \sg_t \in \partial \func(\param_t).
    \end{cases}
\end{align}
One can introduce the momentum variable $m_t:=\dot{\sg}_t$ which solves the differential equation 
\begin{align*}
    \gamma \dot{m}_t + m_t = -\nabla \empLoss(\param_t).
\end{align*}
If one assumes $m_0=0$, this equation has the explicit solution
\begin{align*}
    m_t=-\int_0^t\exp\left(\frac{s-t}{\gamma}\right)\nabla\empLoss(\param_s)\d s
\end{align*}
and hence the second-order in time equation \labelcref{eq:iss_inertial} is equivalent to the gradient memory inverse scale space flow
\begin{align}\label{eq:iss_gradient_memory}
    \begin{cases}
        \dot{\sg}_t = - \int_0^t \exp\left(\frac{s-t}{\gamma}\right)\nabla\empLoss(\param_s)\d s, \\
        \sg_t \in \partial \func(\param_t).
    \end{cases}
\end{align}
For a nice overview over the derivation of gradient flows with momentum we refer to \citet{orvieto2020role}

\subsection{From Bregman to Linearized Bregman Iterations}\label{sec:derivation}
The starting point for the derivation of \cref{alg:proximal_bregman_training} is the Bregman iteration \labelcref{eq:bregman_iteration}, which is the time discretization of the inverse scale space flow \labelcref{eq:iss_0}.

Since the iterations~\labelcref{eq:bregman_iteration} require the minimization of the loss in every iteration they are not feasible for large-scale neural networks.
Therefore, we consider linearized Bregman iterations \citep{cai2009linearized}, which linearize the loss function by
\begin{align*}
    \empLoss(\param) \approx \empLoss(\param^{(k)}) + \left\langle g^{(k)},\param-\param^{(k)}\right\rangle,\quad g^{(k)}:=\nabla\empLoss(\param^{(k)}),
\end{align*}
and replace the energy $\func$ with the \revision{strongly convex} elastic-net regularization
\begin{align}\label{eq:elastic_net}
    \func_\delta(\param):=\func(\param)+\frac{1}{2\delta}\norm{\param}^2,\quad\revision{\delta\in(0,\infty).}
\end{align}
Omitting all terms which do not depend on $\param$, the first line of \labelcref{eq:bregman_iteration} then becomes
\begin{align}
    \param^{(k+1)} 
    &= \argmin_{\param\in\Param}\langle\tau^{(k)} g^{(k)},\param\rangle + \func_\delta(\param)-\langle v^{(k)},\param\rangle \notag\\
    &= \argmin_{\param\in\Param}\langle\tau^{(k)} g^{(k)},\param\rangle + \func(\param)+\frac{1}{2\delta}\norm{\param}^2-\langle v^{(k)},\param\rangle \notag\\
    &=\argmin_{\param\in\Param}\frac{1}{2\delta}\norm{\param - \delta\left(v^{(k)}-\tau^{(k)} g^{(k)}\right)}^2 + \func(\param) \notag\\
    \label{eq:prox_iteration}
    &=\prox{\delta\func}\left(\delta\left(v^{(k)}-\tau^{(k)} g^{(k)}\right)\right).
\end{align}
The vector $v^{(k)}\in\partial\func_\delta(\param^{(k)})$ is a subgradient of the functional $\func_\delta$ in the previous iterate.
Using the update
\begin{align*}
    v^{(k+1)} := v^{(k)} - \tau^{(k)} g^{(k)}
\end{align*}
and combining this with \labelcref{eq:prox_iteration} we obtain the compact update scheme
\begin{subequations}\label{eq:lin_breg_it}
\begin{align}
    g^{(k)} &= \nabla \empLoss(\param^{(k)}), \\
    v^{(k+1)} &= v^{(k)} - \tau^{(k)} g^{(k)}, \\
    \param^{(k+1)} &= \prox{\delta\func}\left(\delta v^{(k+1)}\right).
\end{align}
\end{subequations}
This iteration is an equivalent reformulation of linearized Bregman iterations \citep{yin2008bregman,cai2009linearized,cai2009convergence,osher2010fast,yin2010analysis,benning2018modern}, which are usually expressed in a more complicated way.
\revision{
Furthermore, it coincides with the mirror descent algorithm \citep{beck2003mirror} applied to the functional $\func_\delta$.

Note that \cite{yin2010analysis} showed that for quadratic loss functions the elastic-net regularization parameter $\delta>0$ has no influence on the asymptotics of linearized Bregman iterations, if chosen larger than a certain threshold. This effect is referred to as \emph{exact regularization} \citep{friedlander2008exact}.}
The iteration scheme simply computes a gradient descent in the subgradient variable $v$ and recovers the weights $\param$ by evaluating the proximal operator of~$v$.
This makes it significantly cheaper than the original Bregman iterations~\labelcref{eq:bregman_iteration} which require the minimization of the loss \revision{in} every iteration.

Note that the last line in \labelcref{eq:lin_breg_it} is equivalent to $v^{(k+1)}$ satisfying the optimality condition
\begin{align}\label{eq:v_subgrad}
    v^{(k+1)}\in \partial\func_\delta(\param^{(k+1)}).
\end{align}
In particular, by letting $\tau^{(k)}\to 0$ the iteration~\labelcref{eq:lin_breg_it} can be viewed as explicit Euler discretization of the inverse scale space flow \labelcref{eq:iss_0} of the elastic-net regularized functional $\func_\delta$:
\begin{align}\label{eq:iss_elastic_net}
    \begin{cases}
    \dot{v}_t = -\nabla\empLoss(\param_t), \\
    v_t \in \partial \func_\delta(\param_t).
    \end{cases}
\end{align}
By embedding \labelcref{eq:lin_breg_it} into a stochastic batch gradient descent framework we obtain \LinBreg{} from \cref{alg:proximal_bregman_training}.

\subsection{Linearized Bregman Iterations with Momentum}
More generally we consider an inertial version of \labelcref{eq:iss_elastic_net}, which as in \cref{sec:iss_flows} is given by
\begin{align}
    \label{eq:second_order_iss}
    \begin{cases}
        \gamma \ddot{v}_t + \dot{v}_t = -\nabla\empLoss(\param_t), \\
        v_t \in \partial \func_\delta(\param_t).
    \end{cases}
\end{align}
We discretize this equation in time by approximating the time derivatives as
\begin{align*}
    \ddot{v}_t &\approx \frac{v^{(k+1)}-2v^{(k)}+v^{(k-1)}}{(\tau^{(k)})^2}, \\
    \dot{v}_t &\approx \frac{v^{(k+1)}-v^{(k)}}{\tau^{(k)}}, 
\end{align*}
such that after some reformulation we obtain the iteration
\begin{subequations}\label{eq:lin_breg_it_inertia}
\begin{align}
    v^{(k+1)} &= \frac{\tau^{(k)}+2\gamma}{\tau^{(k)}+\gamma} v^{(k)} - \frac{\gamma}{\tau^{(k)}+\gamma}v^{(k-1)} - \frac{(\tau^{(k)})^2}{\tau^{(k)}+\gamma}\nabla\empLoss(\param^{(k)}), \\
    \param^{(k+1)} &= \prox{\delta \func}(\delta v^{(k+1)}).
\end{align}
\end{subequations}
To see the relation to the gradient memory equation \labelcref{eq:iss_gradient_memory}, derived in \cref{sec:iss_flows}, we rewrite \labelcref{eq:lin_breg_it_inertia}, using the new variables
\begin{align}
    m^{(k+1)} := v^{(k)} - v^{(k+1)},\quad
    \beta^{(k)} := \frac{\gamma}{\tau^{(k)}+\gamma}\in[0,1).
\end{align}
Plugging this into \labelcref{eq:lin_breg_it_inertia} yields the iteration
\begin{subequations}\label{eq:lin_breg_it_momentum}
\begin{align}
    m^{(k+1)} &= \beta^{(k)} m^{(k)} + (1-\beta^{(k)})\tau^{(k)} \nabla\empLoss(\param^{(k)}),\\
    v^{(k+1)} &= v^{(k)} - m^{(k+1)},\\
    \param^{(k+1)} &= \prox{\delta\func}(\delta v^{(k+1)}).
\end{align}
\end{subequations}
Similar to before, embedding this into a stochastic batch gradient descent framework we obtain \LinBreg{} with momentum from \cref{alg:proximal_bregman_training_momentum}.
Note that, contrary to stochastic gradient descent with momentum \citep{orvieto2020role}, the momentum acts on the subgradients $v$ and not on the parameters $\param$.

Analogously, we propose \AdaBreg{} in \cref{alg:proximal_bregman_training_adam}, which is a generalization of the Adam algorithm~\citep{kingma2014adam}.
Here, we also apply the bias correction steps on the subgradient $v$ and reconstruct the parameters $\param$ using the proximal operator of the regularizer $\func$.
\begin{algorithm}[htb]
\def\commentWidth{5cm}
\newcommand{\atcp}[1]{\tcp*[r]{\makebox[\commentWidth]{#1\hfill}}}
\setstretch{1.2}
\DontPrintSemicolon
\SetKwInOut{Input}{input}\SetKwInOut{Output}{output}
\SetKwInOut{Default}{default}
\Default{$\delta=1$,\quad $\beta=0.9$}
$\param\gets$ \cref{sec:initialization},\quad $v \gets \partial\func(\param) + \frac{1}{\delta}\param$,\quad $m \gets 0$ \atcp{initialize}
\For{epoch $e = 1$ \KwTo $E$}{
\For{minibatch $B\subset \trSet$}{
$g \gets \nabla\empBatchLoss(\param;B)$      \atcp{Backpropagation}
$m \gets \beta\,m + (1-\beta)\tau\,g$   \atcp{Momentum update}
$v \gets v - m$                         \atcp{Momentum step}
$\param \gets \prox{\delta\func}\left(\delta v\right)$   \atcp{Regularization}
}
}
\caption{\emph{\LinBreg{} with Momentum}, an acceleration of \LinBreg{} using momentum-based gradient memory.}
\label{alg:proximal_bregman_training_momentum}
\end{algorithm}
\begin{algorithm}[htb]
\setstretch{1.2}
\def\commentWidth{5cm}
\newcommand{\atcp}[1]{\tcp*[r]{\makebox[\commentWidth]{#1\hfill}}}
\newcommand{\atcps}[1]{\tcp*[r]{\makebox[3cm]{#1\hfill}}}
\setstretch{1.2}
\DontPrintSemicolon
\SetKwInOut{Input}{input}\SetKwInOut{Output}{output}
\SetKwInOut{Default}{default}
\Default{$\delta=1$, $\beta_1=0.9$,\quad $\beta_2=0.999$,\quad $\epsilon=10^{-8}$}
$\param\gets$ \cref{sec:initialization},\quad $v \gets \partial\func(\param) + \frac{1}{\delta}\param$,\quad $m_1 \gets 0$,\quad $m_2 \gets 0$ \atcps{initialize}
\For{epoch $e = 1$ \KwTo $E$}{
\For{minibatch $B\subset \trSet$}{
$k \gets k+1$\;
$g \gets \nabla\empBatchLoss(\param;B)$ \atcp{Backpropagation}
$m_1 \gets \beta_1\,m_1 + (1-\beta_1)\,g$ \atcp{First moment estimate}
$\hat m_1 \gets m_1/(1-\beta_1^k)$ \atcp{Bias correction}
$m_2 \gets \beta_2\,m_2 + (1-\beta_2)\,g^2$ \atcp{Second raw moment estimate}
$\hat m_2 \gets m_2/(1-\beta_2^k)$ \atcp{Bias correction}
$v\gets v - \tau\,\hat m_1/(\sqrt{\hat m_2} + \epsilon)$ \atcp{Moment step}
$\param \gets \prox{\delta\func}\left(\delta v\right)$ \atcp{Regularization}
}
}
\caption{\emph{\AdaBreg{}}, a Bregman version of the Adam algorithm which uses moment-based bias correction.}
\label{alg:proximal_bregman_training_adam}
\end{algorithm}

\section{Analysis of Stochastic Linearized Bregman Iterations}\label{sec:analysis}

In this section we provide a convergence analysis of stochastic linearized Bregman iterations.
They are valid in a general sense and do not rely on $\empLoss$ being an empirical loss or $\param$ being weights of a neural network.
Still we keep the notation fixed for clarity.
All proofs can be found in the appendix.

We let $(\Omega,F,\P)$ be a probability space, $\Param$ be a Hilbert space, \revision{$\empLoss:\Param\to\R$ a Fr\'echet differentiable loss function, and $g:\Param\times\Omega \to \Param$ an unbiased estimator of $\nabla\empLoss$, meaning $\Exp{g(\param;\omega)} = \nabla\empLoss(\param)$ for all $\param\in\Param$.}
\revision{This and all other expected values are taken with respect to the random variable $\omega\in\Omega$ which, in our case, models the randomly drawn batch of training in data in \labelcref{eq:emp_loss}.}
We study the stochastic linearized Bregman iterations
\begin{subequations}\label{eq:stochlinbreg}
\begin{align}
    \text{draw }&\omega^{(k)}\text{ from }\Omega\text{ using the law of }\P,\\
    g^{(k)} &:= \revision{g(\param^{(k)};\omega^{(k)})},\\
    v^{(k+1)} &:= v^{(k)} - \tau^{(k)} g^{(k)},\\
    \param^{(k+1)} &:= \prox{\delta J}(\delta v^{(k+1)}).
\end{align}
\end{subequations}

For our analysis we need some assumptions on the loss function $\empLoss$ which are very common in the analysis of nonlinear optimization methods.
Besides boundedness from below, we demand differentiability and Lipschitz-continuous gradients, which are standard assumptions in nonlinear optimization since they allow to prove \emph{sufficient decrease} of the loss.
\revision{We refer to \cite{benning2018choose} for an example of a neural network the associated loss of which satisfies the following assumption.}
\begin{assumption}[Loss function]\label{ass:loss}
We assume the following conditions on the loss function:
\begin{itemize}
    \item The loss function $\empLoss$ is bounded from below and without loss of generality we assume $\empLoss\geq 0$.
    \item The function $\empLoss$ is continuously differentiable.
    \item The gradient of the loss function $\param\mapsto\nabla\empLoss(\param)$ is $L$-Lipschitz for $L\in(0,\infty)$:
    \begin{align}\label{ineq:gradLip}
        \norm{\nabla\empLoss(\tilde\param)-\nabla\empLoss(\param)}\leq L \norm{\tilde\param-\param},\quad \forall \param,\tilde\param\in\Param.
    \end{align}
\end{itemize}
\end{assumption}
\begin{remark}
Note that the Lipschitz continuity of the gradient in particular implies the classical estimate \citep{beck2017first,bauschke2011convex}
\begin{align}\label{ineq:L-smooth}
    \empLoss(\tilde\param) \leq \empLoss(\param) + \langle\nabla\empLoss(\param),\tilde\param - \param\rangle + \frac{L}{2}\norm{\tilde\param-\param}^2,\quad\forall\param,\tilde\param\in\Param.
\end{align}
\end{remark}

Furthermore, we need the following assumption, being of stochastic nature, which requires the gradient estimator to have \revision{uniformly} bounded variance.
\begin{assumption}[Bounded variance]\label{ass:variance}
There exists a constant $\sigma>0$ such that for any $\param\in\Param$ it holds
\begin{align}
    \E\left[\norm{g(\param;\omega)-\nabla\empLoss(\param)}^2\right] \leq \sigma^2.
\end{align}
\end{assumption}
\revision{This is a standard assumption in the analysis of stochastic optimization methods and many authors actually demand the more restrictive condition of uniformly bounded stochastic gradients $\Exp{\norm{g(\param;\omega)}^2}\leq C$ for all $\param\in\Param$. 
Remarkably, both assumptions have been shown to be unnecessary for proving convergence of stochastic gradient descent of convex \citep{pmlr-v80-nguyen18c} and non-convex functions \citep{lei2019stochastic}. Generalizing this to linearized Bregman iterations is however completely non-trivial which is why we stick to \cref{ass:variance}.}

\revision{%
We also state our assumptions on the regularizer $\func$, which are extremely mild.
\begin{assumption}[Regularizer]\label{ass:regularizer}
We assume that $\func:\Param\to(-\infty,\infty]$ is a convex, proper, and lower semicontinuous functional on the Hilbert space $\Param$.
\end{assumption}}

All other assumptions will be stated when they are needed.

\subsection{Decay of the Loss Function}
\label{sec:loss_decay}

We first analyze how the iteration \labelcref{eq:stochlinbreg} decreases the loss $\empLoss$.
Such decrease properties of deterministic linearized Bregman iterations in a different formulation were already studied by \citet{benning2018choose,benning2018modern}.
Note that for the loss decay we do not require any sort of convexity of $\empLoss$ whatsoever, but merely Lipschitz continuity of the gradient, i.e., \labelcref{ineq:L-smooth}.

\begin{theorem}[Loss decay]\label{thm:decreasing_loss}
\revision{Assume that \cref{ass:loss,ass:variance,ass:regularizer} hold true, let $\delta>0$,} and let the step sizes satisfy $\tau^{(k)} \leq \frac{2}{\delta L}$.
Then there exist constants $c,C>0$ such that for every $k\in\N$ the iterates of \labelcref{eq:stochlinbreg} satisfy 
\begin{align}\label{ineq:loss_decay}
    \begin{split}
    \E\left[\empLoss(\param^{(k+1)})\right] + \frac{1}{\tau^{(k)}}\E\left[ D^\mathrm{sym}_\func(\param^{(k+1)},\param^{(k)})\right] + \frac{C}{2\delta\tau^{(k)}}\E\left[\norm{\param^{(k+1)}-\param^{(k)}}^2\right] \\
    \leq \E\left[\empLoss(\param^{(k)})\right] + \tau^{(k)}\delta\frac{\sigma^2}{2c},
    \end{split}
\end{align}
\end{theorem}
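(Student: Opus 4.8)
The plan is to combine the descent lemma for the $L$-smooth loss with the defining subgradient relation of the linearized Bregman scheme, and to control the single stochastic cross-term via Young's inequality. First I would record the structural fact that every iterate satisfies $v^{(k)}\in\partial\func_\delta(\param^{(k)})$: this holds at $k=0$ by the initialization $v\gets\partial\func(\param)+\frac1\delta\param$ and is propagated by the optimality condition \labelcref{eq:v_subgrad} of the proximal step. Writing $v^{(k)}=\sg^{(k)}+\frac1\delta\param^{(k)}$ with $\sg^{(k)}\in\partial\func(\param^{(k)})$, and testing the update $v^{(k+1)}-v^{(k)}=-\tau^{(k)}g^{(k)}$ against $\param^{(k+1)}-\param^{(k)}$, the left-hand side splits exactly into the symmetric Bregman distance of $\func$ plus the quadratic coming from the elastic-net term, yielding the key identity
\begin{align*}
-\tau^{(k)}\langle g^{(k)},\param^{(k+1)}-\param^{(k)}\rangle = D^\mathrm{sym}_\func(\param^{(k+1)},\param^{(k)}) + \frac1\delta\norm{\param^{(k+1)}-\param^{(k)}}^2.
\end{align*}

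Next I would invoke the descent lemma \labelcref{ineq:L-smooth} to bound $\empLoss(\param^{(k+1)})$ by $\empLoss(\param^{(k)})+\langle\nabla\empLoss(\param^{(k)}),\param^{(k+1)}-\param^{(k)}\rangle+\frac L2\norm{\param^{(k+1)}-\param^{(k)}}^2$, and replace the true gradient by the stochastic one through $\nabla\empLoss(\param^{(k)})=g^{(k)}+e^{(k)}$ with error $e^{(k)}:=\nabla\empLoss(\param^{(k)})-g^{(k)}$. Substituting the key identity turns the $\langle g^{(k)},\cdot\rangle$ term into $-\frac1{\tau^{(k)}}D^\mathrm{sym}_\func(\param^{(k+1)},\param^{(k)})-\frac1{\delta\tau^{(k)}}\norm{\param^{(k+1)}-\param^{(k)}}^2$, so that after collecting the quadratic terms one is left with the cross-term $\langle e^{(k)},\param^{(k+1)}-\param^{(k)}\rangle$ and a coefficient $-\frac1{\delta\tau^{(k)}}+\frac L2$ in front of $\norm{\param^{(k+1)}-\param^{(k)}}^2$, which is $\le 0$ precisely because $\tau^{(k)}\le\frac2{\delta L}$.

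The crux is the cross-term $\langle e^{(k)},\param^{(k+1)}-\param^{(k)}\rangle$. It cannot be eliminated by taking expectations: since $\param^{(k+1)}=\prox{\delta\func}(\delta(v^{(k)}-\tau^{(k)}g^{(k)}))$ depends on the noisy gradient through the nonlinear proximal map, $\param^{(k+1)}-\param^{(k)}$ is correlated with $e^{(k)}$ and the term is not mean-zero. Instead I would split it by Young's inequality, $\langle e^{(k)},\param^{(k+1)}-\param^{(k)}\rangle\le\frac{\eta}{2}\norm{\param^{(k+1)}-\param^{(k)}}^2+\frac1{2\eta}\norm{e^{(k)}}^2$, and choose $\eta$ of order $1/(\delta\tau^{(k)})$ so that the available negative quadratic budget $\frac1{\delta\tau^{(k)}}-\frac L2\ge0$ is split into two parts: one part absorbs the $\frac{\eta}{2}\norm{\cdot}^2$ contribution, the other remains as the term $\frac{C}{2\delta\tau^{(k)}}\norm{\param^{(k+1)}-\param^{(k)}}^2$ on the left, fixing the constants $c,C>0$ (the matching $\frac1{2\eta}\norm{e^{(k)}}^2$ producing the $\tau^{(k)}\delta\frac{\sigma^2}{2c}$ remainder). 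Finally I would take expectations and handle $\E[\norm{e^{(k)}}^2]\le\sigma^2$ by conditioning on the natural filtration up to step $k$ (so that $\param^{(k)}$ is known and \cref{ass:variance} applies to the fresh sample $\omega^{(k)}$) and then using the tower property. I expect the delicate bookkeeping to be the interplay of the step-size condition with the choice of $\eta$: the negative quadratic slack degenerates as $\tau^{(k)}\uparrow\frac2{\delta L}$, so obtaining strictly positive uniform constants really exploits that this budget is controlled through the precise bound $\tau^{(k)}\le\frac2{\delta L}$.
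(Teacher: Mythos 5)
Your proposal is correct and follows essentially the same route as the paper's proof: the descent lemma \labelcref{ineq:L-smooth}, the decomposition of $\nabla\empLoss(\param^{(k)})$ into $g^{(k)}$ plus error, the identity $\langle v^{(k+1)}-v^{(k)},\param^{(k+1)}-\param^{(k)}\rangle = D^{\mathrm{sym}}_\func(\param^{(k+1)},\param^{(k)}) + \frac{1}{\delta}\norm{\param^{(k+1)}-\param^{(k)}}^2$, and Young's inequality to split the stochastic cross-term so that one piece is absorbed by the quadratic slack $\frac{2-L\delta\tau^{(k)}}{2\delta\tau^{(k)}}$ and the other becomes the $\tau^{(k)}\delta\frac{\sigma^2}{2c}$ remainder under \cref{ass:variance}. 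Your observations that the cross-term is not mean-zero (since $\param^{(k+1)}$ depends on $g^{(k)}$ through the proximal map) and that the absorption degenerates as $\tau^{(k)}$ approaches $\frac{2}{\delta L}$ are exactly the delicate points of the paper's argument as well.
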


\begin{corollary}[Summability]\label[corollary]{cor:square_sum}
Under the conditions of \cref{thm:decreasing_loss} and with the additional assumption that the step sizes are non-increasing and square-summable, meaning
\begin{align*}
    \tau^{(k+1)} \leq \tau^{(k)},\quad\forall k\in\N,\qquad
    \sum_{k=0}^{\infty}(\tau^{(k)})^2 <\infty,
\end{align*}
it holds
\begin{align*}
    \sum_{k=0}^\infty \E\left[D^\mathrm{sym}_\func(\param^{(k+1)},\param^{(k)})\right] < \infty, \qquad
    \sum_{k=0}^\infty
    \E\left[\norm{\param^{(k+1)}-\param^{(k)}}^2\right] < \infty.
\end{align*}
\end{corollary}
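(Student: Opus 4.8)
The plan is to deduce both summability claims directly from the one-step descent estimate \labelcref{ineq:loss_decay} in \cref{thm:decreasing_loss}. Since the two nonnegative quantities we want to sum, namely the symmetric Bregman distance and the squared increment, appear there premultiplied by $1/\tau^{(k)}$, I would first multiply \labelcref{ineq:loss_decay} through by $\tau^{(k)}>0$ to cancel these factors. Writing $\ell_k:=\E[\empLoss(\param^{(k)})]\ge 0$ (nonnegativity from the lower bound in \cref{ass:loss}), this yields
\begin{align*}
\E[D^\mathrm{sym}_\func(\param^{(k+1)},\param^{(k)})] + \frac{C}{2\delta}\,\E[\norm{\param^{(k+1)}-\param^{(k)}}^2] \le \tau^{(k)}\,(\ell_k-\ell_{k+1}) + (\tau^{(k)})^2\,\frac{\delta\sigma^2}{2c}.
\end{align*}
Both left-hand terms are nonnegative (the Bregman distance by its distance axiom, the squared norm trivially), so after summing over $k=0,\dots,N$ the task reduces to bounding the two right-hand sums uniformly in $N$.

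The square-summable term is immediate, since $\frac{\delta\sigma^2}{2c}\sum_{k=0}^N(\tau^{(k)})^2\le\frac{\delta\sigma^2}{2c}\sum_{k=0}^\infty(\tau^{(k)})^2<\infty$ by hypothesis. The delicate term is $\sum_{k=0}^N\tau^{(k)}(\ell_k-\ell_{k+1})$: a naive telescoping is obstructed by the variable weight $\tau^{(k)}$, and this is precisely where the monotonicity assumption $\tau^{(k+1)}\le\tau^{(k)}$ must be used. I would handle it by summation by parts (Abel summation), rewriting
\begin{align*}
\sum_{k=0}^N\tau^{(k)}(\ell_k-\ell_{k+1}) = \tau^{(0)}\ell_0 + \sum_{k=1}^N(\tau^{(k)}-\tau^{(k-1)})\,\ell_k - \tau^{(N)}\ell_{N+1}.
\end{align*}
Because $\tau^{(k)}-\tau^{(k-1)}\le 0$ and $\ell_k\ge 0$, every term in the middle sum is nonpositive, and the final term is nonpositive as well; hence the entire expression is bounded above by $\tau^{(0)}\ell_0$, uniformly in $N$.

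Combining the two bounds gives, for every $N\in\N$,
\begin{align*}
\sum_{k=0}^N\E[D^\mathrm{sym}_\func(\param^{(k+1)},\param^{(k)})] + \frac{C}{2\delta}\sum_{k=0}^N\E[\norm{\param^{(k+1)}-\param^{(k)}}^2] \le \tau^{(0)}\ell_0 + \frac{\delta\sigma^2}{2c}\sum_{k=0}^\infty(\tau^{(k)})^2,
\end{align*}
whose right-hand side is finite and independent of $N$. Since the partial sums of the two nonnegative series are nondecreasing in $N$ and uniformly bounded, both series converge, which is exactly the assertion. The main obstacle, and the only nonroutine point, is the middle term above: one must avoid telescoping $\ell_k-\ell_{k+1}$ directly (the $\tau^{(k)}$ weight forbids it) and instead exploit the monotonicity of the step sizes via Abel summation together with $\ell_k\ge 0$.
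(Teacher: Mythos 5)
Your proof is correct and takes essentially the same route as the paper: multiply \labelcref{ineq:loss_decay} by $\tau^{(k)}$, sum, and use the monotonicity of the step sizes together with $\empLoss\geq 0$ to bound the weighted telescoping term by $\tau^{(0)}\E\left[\empLoss(\param^{(0)})\right]$, leaving only the square-summable remainder. The only cosmetic difference is that you make the rearrangement explicit via Abel summation, whereas the paper performs the equivalent index shift and drops the nonnegative boundary term $\tau^{(K)}\E\left[\empLoss(\param^{(K)})\right]$.
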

\begin{remark}[Deterministic case]
Note that in the deterministic setting with $\sigma=0$ the statement of \cref{thm:decreasing_loss} coincides with \citet{benning2018choose,benning2018modern}.
In particular, the loss \revision{decays} monotonously and one gets stronger summability than in \cref{cor:square_sum} since one does not have to multiply with $\tau^{(k)}$.
\end{remark}
\comment{
For the stochastic linearized Bregman iterations with momentum, which are the stochastic version of \labelcref{eq:lin_breg_it_momentum}, the decrease of the loss cannot be guaranteed.
This is a well-known feature of inertial methods. 
The stochastic linearized Bregman iterations with momentum are given by
\begin{subequations}\label{eq:stochlinbreg_momentum}
\begin{align}
    \text{draw }&\omega^{(k)}\text{ from }\Omega\text{ using the law of }\P,\\
    g^{(k)} &:= \nabla_\param L(\param^{(k)};\omega^{(k)}),\\
    m^{(k+1)} &:= \beta^{(k)} m^{(k)} + (1-\beta^{(k)})\tau^{(k)} g^{(k)},\\
    v^{(k+1)} &:= v^{(k)} - m^{(k)},\\
    \param^{(k+1)} &:= \prox{\delta J}(\delta v^{(k+1)}).
\end{align}
\end{subequations}

With the same techniques as used in \cref{thm:decreasing_loss} one can only prove the following result.
\begin{theorem}[Inertial loss decay]\label{thm:decreasing_loss_momentum}
\todo{change}
Assume that $\empLoss$ satisfies \cref{ass:loss,ass:variance}.
Then, for every $k\in\N$ the iterates of \labelcref{eq:stochlinbreg_momentum} satisfy
\begin{align}
\begin{split}
    (1-\beta^{(k)})\E\left[\empLoss(\param^{(k+1)})\right] &+ \frac{1}{\tau^{(k)}} \E\left[D^\mathrm{sym}_\func(\param^{(k+1)},\param^{(k)})\right]\\
    &+ \frac{2-(1-\beta^{(k)})L\tau^{(k)}\delta}{2\tau^{(k)}\delta}\E\left[\norm{\param^{(k+1)}-\param^{(k)}}^2\right] \\
    & \!\!\!\!\!\!\!\!\!\!\!\!\!\!\! \leq (1-\beta^{(k)})\E\left[\empLoss(\param^{(k)})\right] - \frac{\beta^{(k)}}{\tau^{(k)}} \E\left[\left\langle m^{(k)}, \param^{(k+1)}-\param^{(k)} \right\rangle\right]
\end{split}
\end{align}
In particular, for $\tau^{(k)}\leq \frac{2}{(1-\beta^{(k)})\delta L}$ one obtains
\begin{align}\label{ineq:loss_momentum}
    (1-\beta^{(k)})\left(\E\left[\empLoss(\param^{(k+1)})\right]-\E\left[\empLoss(\param^{(k)})\right]\right) \leq -\frac{\beta^{(k)}}{\tau^{(k)}}\E\left[\left\langle {m^{(k)}}, {\param^{(k+1)}-\param^{(k)}} \right\rangle\right].
\end{align}
\end{theorem}
\begin{remark}[Vanishing momentum]
For $\beta^{(k)}=0$ all statements in Theorem~\ref{thm:decreasing_loss_momentum} precisely reduce to those in Theorem~\ref{thm:decreasing_loss}.
\end{remark}}

\subsection{Convergence of the Iterates}
\label{sec:cvgc_iterates}

In this section we establish two convergence results for the iterates of the stochastic linearized Bregman iterations \labelcref{eq:stochlinbreg}.
According to common practice and for self-containedness we restrict ourselves to \revision{strongly convex} losses.
Obviously, our \revision{results} remain true for non-convex losses if one assumes convexity around local minima and applies our arguments locally.
We note that one could also extend the deterministic convergence proof of \citet{benning2018choose}---which is based on the Kurdyka-\L ojasiewicz (KL) inequality and works for non-convex losses---to the stochastic setting.
However, this is beyond the scope of this paper and conveys less intuition than our proofs.

For our first convergence result---asserting norm convergence of a subsequence---we need \revision{the condition on the loss function \cref{ass:loss},} the bounded variance condition from \cref{ass:variance} and strong convexity of the loss $\empLoss$:
\begin{assumption}[Strong convexity]\label{ass:mu-convex} The loss function $\param\mapsto\empLoss(\param)$ is $\mu$-strongly convex for $\mu\in(0,\infty)$, meaning
\begin{align}
    \empLoss(\tilde\param) \geq \empLoss(\param) + \langle\nabla\empLoss(\param),\tilde\param - \param\rangle + \frac{\mu}{2}\norm{\tilde\param-\param}^2,\quad\forall\param,\tilde\param\in\Param.
\end{align}
\end{assumption}
\revision{Note that by virtue of \labelcref{ineq:L-smooth} it holds $\mu\leq L$ if the loss satisfies \cref{ass:loss,ass:mu-convex}.}

Our second convergence convergence result---asserting convergence in the Bregman distance of $\func_\delta$ which is a stronger topology than norm convergence---requires a stricter convexity condition, tailored to the Bregman geometry.

\begin{assumption}[Strong Bregman convexity]\label{ass:breg-convex}
The loss function $\param\mapsto\empLoss(\param)$ satisfies
\begin{align}
    \empLoss(\tilde\param) \geq \empLoss(\param) + \langle\nabla\empLoss(\param),\tilde\param - \param\rangle + \nu D_{J_\delta}^{v}(\tilde\param,\param),\quad\forall\param,\tilde\param\in\Param,\;v\in\partial J_\delta(\param),
\end{align}
\revision{where $\func_\delta$ for $\delta>0$ is defined in \labelcref{eq:elastic_net}.}
In particular, $\empLoss$ satisfies \cref{ass:mu-convex} with $\mu=\nu/\delta$.
\end{assumption}
\begin{remark}[The Bregman convexity assumption]
\cref{ass:breg-convex} seems to be quite restrictive, however, in finite dimensions it is locally equivalent to \cref{ass:mu-convex}, as we argue in the following.
Note that it suffices if the assumptions above are satisfied in a vicinity of the (local) minimum to which the algorithm converges.
For proving convergence we will use \cref{ass:breg-convex} with $\tilde\param=\param^*$, a (local) minimum, and $\param$ close to $\param^*$.
Using \cref{lem:sum_breg_dist} in the appendix the assumption can be rewritten as
\begin{align*}
    \empLoss(\param^*) \geq \empLoss(\param) + \langle\nabla\empLoss(\param),\param^* - \param\rangle + \frac{\nu}{2\delta}\norm{\param^*-\param}^2 + \nu D_\func^\sg(\param^*,\param),\quad\sg\in\partial\func(\param)
\end{align*}
\revision{
and we will argue that for $\param$ close to $\param^*$ the extra term vanishes, i.e. $D_\func^\sg(\param^*,\param)=0$.
For this we have to show that $\sg$ is not only a subgradient at $\param$ but also at $\param^*$:
If $\sg$ is a subgradient of $\func$ at both points, i.e., $\sg\in \partial\func(\param) \cap \partial\func(\param^*)$ we obtain that their Bregman distance is zero. 
This can be seen using the definition of the Bregman distance \labelcref{eq:bregman_distance}:
\begin{align*}
\left.\begin{aligned}
D_\func^\sg(\param^*,\param) \geq 0\\
D_\func^\sg(\param,\param^*) \geq 0
\end{aligned}\right\rbrace
\implies
0 \leq D_\func^\sg(\param^*,\param) + D_\func^\sg(\param,\param^*) = 0.    
\end{align*}}
In finite dimensions and for $\func(\param)=\norm{\param}_1 = \sum_{i=1}^N|\param_i|$ equal to the $\ell_1$-norm \revision{we can use that $\langle\sg,\param\rangle=\func(\param)=\sum_{i=1}^N\sign(\param_i)\param_i=\sum_{i=1}^N\abs{\param_i}=\func(\param)$} and simplify \revision{the Bregman distance} to
\begin{align*}
    D_\func^\sg(\param^*,\param) = 
    J(\param^*) - \langle\sg,\param^*\rangle 
    = \sum_{i=1}^N |\param^*_i|-\sign(\param_i)\param_i^*
    = \sum_{i=1}^N \param_i^*\left(\sign(\param_i^*) - \sign(\param_i)\right).
\end{align*}
Obviously, the terms in this sum where $\param_i^*=0$ vanish anyways.
Hence, the expression is zero whenever the non-zero entries of $\param$ have the same sign as those of $\param^*$ which is the case if $\norm{\param-\param^*}_\infty<\min\left\lbrace|\param^*_i|\st i\in\{1,\dots,N\},\,\param^*_i\neq 0\right\rbrace$.
Since all norms are equivalent in finite dimensions, one obtains
\begin{align*}
    D_\func^\sg(\param^*,\param) = 0\quad \text{for }\norm{\param-\param^*}\text{ sufficiently small.}
\end{align*}
\revision{Hence, \cref{ass:breg-convex} is locally implied by \cref{ass:mu-convex} if $\func(\param)=\norm{\param}_1$.}
\end{remark}

We would like to remark that---even under the weaker \cref{ass:mu-convex}---one needs some coupling of the loss $\empLoss$ and the regularization functional $\func$ in order to obtain convergence to a critical point.
\citet{benning2018choose} demand that a surrogate function involving both $\empLoss$ and $\func$ admits the KL inequality and that the subgradients of $\func$ \revision{are} bounded close to the minimum $\param^*$ of the loss.
Indeed, in our theory using \cref{ass:mu-convex} it suffices to demand that $\func(\param^*)<\infty$.
This assumption is weaker than assuming bounded subgradients but is nevertheless necessary as the following example taken from \citet{benning2018choose} shows.
\begin{example}[Non-convergence to a critical point]
Let $\empLoss(\param)=(\param+1)^2$ for $\param\in\R$ and $\func(\param)=\chi_{[0,\infty)}(\param)$ be the characteristic function of the positive axis.
Then for any initialization the linearized Bregman iterations \labelcref{eq:lin_breg_it} converge to $\param=0$ which is no critical point of~$\empLoss$.
On the other hand, the only critical point $\param^*=-1$ clearly meets $\func(\param^*)=\infty$.
\end{example}

\begin{theorem}[Convergence in norm]\label{thm:cvgc_norm}
\revision{Assume that \cref{ass:loss,ass:variance,ass:regularizer,ass:mu-convex} hold true and let $\delta>0$.}
Furthermore, assume that the step sizes $\tau^{(k)}$ are such that for all $k\in\N$:
\begin{align*}
    \revision{\tau^{(k)}\leq \frac{\mu}{2\delta L^2}},\qquad
    \tau^{(k+1)} \leq \tau^{(k)}, \qquad
    \sum_{k=0}^\infty (\tau^{(k)})^2 < \infty, \qquad
    \sum_{k=0}^\infty \tau^{(k)} = \infty.
\end{align*}
The function $\mathcal{L}$ has a unique minimizer $\param^*$ and if $J(\param^*)<\infty$ the stochastic linearized Bregman iterations \labelcref{eq:stochlinbreg} satisfy the following:
\begin{itemize}
    \item Letting $d_k:=\E\left[D_{\func_\delta}^{v^{(k)}}(\param^*,\param^{(k)})\right]$ it holds
    \begin{align}\label{ineq:decay_bregman_distance}
        d_{k+1} - d_k + \frac{\mu}{4}\tau^{(k)}\E\left[\norm{\param^*-\param^{(k+1)}}^2\right]
        \leq  \frac{\sigma}{2}\left((\tau^{(k)})^2 +\Exp{\norm{\param^{(k)} - \param^{(k+1)}}^2}\right).
    \end{align}
    \item The iterates possess a subsequence converging in the $L^2$-sense of random variables: 
    \begin{align}
        \lim_{j\to\infty}\E\left[\norm{\param^*-\param^{(k_j)}}^2\right] = 0.
    \end{align}
\end{itemize}
\revision{Here, $\func_\delta$ is defined as in \labelcref{eq:elastic_net}.}
\end{theorem}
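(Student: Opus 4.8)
The plan is to monitor the expected Bregman distance $d_k=\Exp{D^{v^{(k)}}_{\func_\delta}(\param^*,\param^{(k)})}$ to the minimizer, establish the one‑step recursion \labelcref{ineq:decay_bregman_distance}, and then obtain the subsequence statement by summation. Throughout I would fix the natural filtration $(\mathcal F_k)$ making $\param^{(k)},v^{(k)}$ measurable and $\omega^{(k)}$ drawn independently, and use that the proximal step in \labelcref{eq:stochlinbreg} is equivalent to the optimality condition \labelcref{eq:v_subgrad}, i.e.\ $v^{(k+1)}\in\partial\func_\delta(\param^{(k+1)})$; by induction every $v^{(k)}$ is a genuine subgradient, so all Bregman distances below are well defined. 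The starting point is the elementary three‑point identity for $\func_\delta$, valid for any subgradients $v^{(k)}\in\partial\func_\delta(\param^{(k)})$, $v^{(k+1)}\in\partial\func_\delta(\param^{(k+1)})$,
\begin{align*}
    D^{v^{(k+1)}}_{\func_\delta}(\param^*,\param^{(k+1)})
    = D^{v^{(k)}}_{\func_\delta}(\param^*,\param^{(k)})
    - D^{v^{(k)}}_{\func_\delta}(\param^{(k+1)},\param^{(k)})
    + \langle v^{(k)}-v^{(k+1)},\,\param^*-\param^{(k+1)}\rangle,
\end{align*}
which follows purely from the definition \labelcref{eq:bregman_distance}. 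Substituting $v^{(k)}-v^{(k+1)}=\tau^{(k)}g^{(k)}$ and taking expectations reduces the whole problem to bounding $\tau^{(k)}\Exp{\langle g^{(k)},\param^*-\param^{(k+1)}\rangle}$ from above, while keeping the favourable negative term $-\Exp{D^{v^{(k)}}_{\func_\delta}(\param^{(k+1)},\param^{(k)})}$.

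Next I would split $g^{(k)}=\nabla\empLoss(\param^{(k)})+\xi^{(k)}$ with $\xi^{(k)}:=g^{(k)}-\nabla\empLoss(\param^{(k)})$. For the exact‑gradient part the key manoeuvre is to add and subtract $\nabla\empLoss(\param^{(k+1)})$: strong convexity (\cref{ass:mu-convex}) evaluated at $\param^{(k+1)}$ together with optimality $\empLoss(\param^*)\le\empLoss(\param^{(k+1)})$ gives $\langle\nabla\empLoss(\param^{(k+1)}),\param^*-\param^{(k+1)}\rangle\le-\tfrac{\mu}{2}\norm{\param^*-\param^{(k+1)}}^2$, while the mismatch $\langle\nabla\empLoss(\param^{(k)})-\nabla\empLoss(\param^{(k+1)}),\param^*-\param^{(k+1)}\rangle$ is controlled by the Lipschitz bound \labelcref{ineq:gradLip} and Young's inequality, costing $\tfrac{\mu}{4}\norm{\param^*-\param^{(k+1)}}^2+\tfrac{L^2}{\mu}\norm{\param^{(k)}-\param^{(k+1)}}^2$. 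This pathwise estimate leaves, after taking expectations, the term $-\tfrac{\mu}{4}\tau^{(k)}\Exp{\norm{\param^*-\param^{(k+1)}}^2}$ (which matches the left‑hand side of \labelcref{ineq:decay_bregman_distance}) plus a displacement term $\tfrac{L^2}{\mu}\tau^{(k)}\Exp{\norm{\param^{(k)}-\param^{(k+1)}}^2}$.

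To finish the recursion I would treat the noise by decomposing $\param^*-\param^{(k+1)}=(\param^*-\param^{(k)})+(\param^{(k)}-\param^{(k+1)})$. The first piece satisfies $\Exp{\langle\xi^{(k)},\param^*-\param^{(k)}\rangle}=0$ by the tower property, since $\Exp{\xi^{(k)}\mid\mathcal F_k}=0$ and $\param^*-\param^{(k)}$ is $\mathcal F_k$‑measurable. For the second I use Cauchy--Schwarz in $L^2(\Omega)$, the bounded‑variance assumption (\cref{ass:variance}, with Jensen giving $\Exp{\norm{\xi^{(k)}}}\le\sigma$), and AM--GM to obtain exactly $\tfrac{\sigma}{2}\bigl((\tau^{(k)})^2+\Exp{\norm{\param^{(k)}-\param^{(k+1)}}^2}\bigr)$, the right‑hand side of \labelcref{ineq:decay_bregman_distance}. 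Finally, the elastic‑net splitting (\cref{lem:sum_breg_dist}) yields $D^{v^{(k)}}_{\func_\delta}(\param^{(k+1)},\param^{(k)})\ge\tfrac{1}{2\delta}\norm{\param^{(k+1)}-\param^{(k)}}^2$, and the step‑size restriction $\tau^{(k)}\le\tfrac{\mu}{2\delta L^2}$ makes $\tfrac{L^2}{\mu}\tau^{(k)}\le\tfrac{1}{2\delta}$, so the deterministic displacement term is absorbed into this negative quadratic, producing precisely \labelcref{ineq:decay_bregman_distance}.

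For the subsequence claim I would sum \labelcref{ineq:decay_bregman_distance} over $k=0,\dots,N-1$ and telescope $d_{k+1}-d_k$. Non‑negativity of the Bregman distance gives $d_N\ge 0$, and $d_0<\infty$ holds exactly because $\func(\param^*)<\infty$ renders $\func_\delta(\param^*)$ finite. The right‑hand sum stays bounded as $N\to\infty$: $\sum_k(\tau^{(k)})^2<\infty$ by hypothesis and $\sum_k\Exp{\norm{\param^{(k+1)}-\param^{(k)}}^2}<\infty$ by \cref{cor:square_sum} (whose assumptions hold, since $\tau^{(k)}\le\tfrac{\mu}{2\delta L^2}\le\tfrac{2}{\delta L}$ as $\mu\le L$). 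Hence $\sum_k\tau^{(k)}\Exp{\norm{\param^*-\param^{(k+1)}}^2}<\infty$, and combined with $\sum_k\tau^{(k)}=\infty$ this forces $\liminf_k\Exp{\norm{\param^*-\param^{(k)}}^2}=0$, yielding the desired $L^2$‑convergent subsequence. I expect the main obstacle to be the stochastic cross‑term: because $\param^{(k+1)}$ is computed from $g^{(k)}$, the noise $\xi^{(k)}$ is correlated with $\param^{(k)}-\param^{(k+1)}$ and cannot be averaged out naively; the decisive step is the decomposition above isolating the conditionally mean‑zero part, with the genuinely correlated part tamed by Cauchy--Schwarz and bounded variance so as to generate only the first power of $\sigma$. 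The remaining difficulty is purely constant bookkeeping, ensuring the Lipschitz‑generated displacement fits under the strongly convex Bregman term via the step‑size condition.
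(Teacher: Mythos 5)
Your proposal is correct and follows essentially the same route as the paper's own proof: the three-point identity you state is exactly the paper's \cref{lem:iterates} (equation \labelcref{eq:breg_diff_1}), the treatment of the exact-gradient part (add/subtract $\nabla\empLoss(\param^{(k+1)})$, strong convexity plus optimality, Lipschitz plus Young with weight $c=\mu/2$), the replacement of $\param^*-\param^{(k+1)}$ by $\param^{(k)}-\param^{(k+1)}$ in the noise term via conditional unbiasedness, the absorption of the displacement term through \cref{lem:sum_breg_dist} and the step-size bound, and the final telescoping argument via \cref{cor:square_sum} all coincide with the paper's argument. The only (cosmetic) deviation is that you bound the stochastic cross-term by Cauchy--Schwarz and AM--GM, which reproduces the constants in \labelcref{ineq:decay_bregman_distance} exactly, whereas the paper uses a $\delta$-weighted Young inequality.
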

\begin{remark}[Choice of step sizes]
A possible step size which satisfies the conditions of \cref{thm:cvgc_norm} is given by $\tau^{(k)} = \tfrac{c}{(k+1)^p}$ where \revision{$0<c<\frac{\mu}{\delta L^2}$} and $p\in(\tfrac{1}{2},1]$.
\end{remark}
\begin{remark}[Deterministic case]
In the deterministic case $\sigma=0$ inequality \labelcref{ineq:decay_bregman_distance} even shows that the Bregman distances decrease along iterations. 
Furthermore, in this case it is not necessary that the step sizes are square-summable and non-increasing since the term on the right hand side does not have to be summed.
\end{remark}
In a finite dimensional setting and using $\ell_1$-regularization one can even show convergence of the whole sequence of Bregman distances.
\revision{
\begin{remark}
With the help of \cref{lem:sum_breg_dist} in the appendix, the quantity $D_{\func_\delta}^{v^{(k)}}(\param^*,\param^{(k)})$ which appears in the decay estimate \labelcref{ineq:decay_bregman_distance} can be simplified as follows:
\begin{align*}
    D_{\func_\delta}^{v^{(k)}}(\param^*,\param^{(k)})
    &=
    \frac{1}{2\delta}\norm{\param^* - \param^{(k)}}^2 + D_\func^{p^{(k)}}(\param^*,\param)
    \\
    &=
    \frac{1}{2\delta}\norm{\param^* - \param^{(k)}}^2 + 
    \func(\param^*) - \func(\param^{(k)}) - \langle \sg^{(k)},\param^*-\param^{(k)}\rangle,
\end{align*}
where $\sg^{(k)}:=v^{(k)}-\frac{1}{\delta}\param^{(k)}\in\partial\func(\param^{(k)})$.
In the case that $\func$ is absolutely $1$-homogeneous, e.g., if $\func(\param)=\norm{\param}_1$ equals the $\ell_1$-norm, this simplifies to
\begin{align*}
    D_{\func_\delta}^{v^{(k)}}(\param^*,\param^{(k)})
    =
    \frac{1}{2\delta}\norm{\param^* - \param^{(k)}}^2 + 
    \func(\param^*) - \langle \sg^{(k)},\param^*\rangle,
\end{align*}
where we used that for absolutely $1$-homogeneous functionals it holds $\langle\sg,\param\rangle=\func(\param)$ for all $\param\in\Param$ and $\sg\in\partial\func(\param)$.
Hence, it measures both the convergence of $\param^{(k)}$ to $\param^*$ in the norm and the convergence of the subgradients $\sg^{(k)}\in\partial\func(\param^{(k)})$ to a subgradient of $\func$ at $\param^*$.
\end{remark}
}
\begin{corollary}[Convergence in finite dimensions]\label[corollary]{cor:convergence_finite_d}
If the parameter space $\Theta$ is finite dimensional and $\func$ equals the $\ell_1$-norm, under the conditions of \cref{thm:cvgc_norm} it even holds $\lim_{k\to\infty} d_k = 0$ which in particular implies $\lim_{k\to\infty}\E\left[\norm{\param^*-\param^{(k)}}^2\right] = 0$.
\end{corollary}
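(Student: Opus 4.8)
The plan is to combine the quantitative decay estimate \labelcref{ineq:decay_bregman_distance} from \cref{thm:cvgc_norm} with the subsequential norm convergence already established there. First I would observe that $d_k\geq 0$ (Bregman distances are non-negative) and, discarding the non-negative term $\frac{\mu}{4}\tau^{(k)}\Exp{\norm{\param^*-\param^{(k+1)}}^2}$ from \labelcref{ineq:decay_bregman_distance}, deduce
\begin{align*}
    d_{k+1}\leq d_k + \frac{\sigma}{2}\left((\tau^{(k)})^2+\Exp{\norm{\param^{(k)}-\param^{(k+1)}}^2}\right).
\end{align*}
The additive perturbation is summable: $\sum_k(\tau^{(k)})^2<\infty$ by the step-size assumption of \cref{thm:cvgc_norm}, and $\sum_k\Exp{\norm{\param^{(k+1)}-\param^{(k)}}^2}<\infty$ by \cref{cor:square_sum} (whose hypotheses follow from those of \cref{thm:cvgc_norm} since $\mu\leq L$ forces $\tau^{(k)}\leq\frac{2}{\delta L}$). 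A non-negative sequence whose increments are bounded above by a summable sequence converges — the standard argument being that $b_k:=d_k-\sum_{j<k}a_j$ is non-increasing and bounded below. Hence $d_k\to d_\infty$ for some $d_\infty\geq 0$, and it only remains to identify the limit as zero.

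To pin down $d_\infty=0$ I would use the subsequence $(\param^{(k_j)})_j$ from \cref{thm:cvgc_norm} along which $\Exp{\norm{\param^*-\param^{(k_j)}}^2}\to 0$ and show $d_{k_j}\to 0$; since the full sequence converges, this forces $d_\infty=0$. Invoking the simplification of $d_k$ for absolutely $1$-homogeneous $\func$ stated in the remark preceding the corollary, I split
\begin{align*}
    d_k=\frac{1}{2\delta}\Exp{\norm{\param^*-\param^{(k)}}^2}+\Exp{D_\func^{\sg^{(k)}}(\param^*,\param^{(k)})},\qquad \sg^{(k)}:=v^{(k)}-\tfrac{1}{\delta}\param^{(k)}\in\partial\func(\param^{(k)}).
\end{align*}
The first term vanishes along $(k_j)$ by construction, so the whole task reduces to controlling the expected Bregman term.

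The crux — and the step I expect to be the main obstacle — is showing $\Exp{D_\func^{\sg^{(k_j)}}(\param^*,\param^{(k_j)})}\to 0$, i.e., that the subgradients asymptotically align with a subgradient of $\func$ at $\param^*$. The key structural facts for the $\ell_1$-norm are that its subgradients are bounded ($\norm{\sg^{(k)}}_\infty\leq 1$) and that, pointwise, $D_\func^{\sg^{(k)}}(\param^*,\param^{(k)})=\sum_i\param_i^*(\sign(\param_i^*)-\sg_i^{(k)})\geq 0$ vanishes as soon as every nonzero coordinate of $\param^*$ has a matching sign in $\param^{(k)}$ — exactly the sign-stability argument already used in the remark on \cref{ass:breg-convex}. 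Setting $\rho:=\min\{\abs{\param_i^*}\st \param_i^*\neq 0\}>0$ (the case $\param^*=0$ being trivial), the Bregman term is identically zero on the event $\norm{\param^{(k)}-\param^*}_\infty<\rho$ and is bounded by the constant $M:=2\norm{\param^*}_1$ in general, so that
\begin{align*}
    \Exp{D_\func^{\sg^{(k)}}(\param^*,\param^{(k)})}\leq M\,\P\!\left(\norm{\param^{(k)}-\param^*}_\infty\geq\rho\right).
\end{align*}

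Finally I would convert this probability bound into a norm bound via finite-dimensional norm equivalence and Markov's inequality: choosing $C>0$ with $\norm{\cdot}_\infty\leq C\norm{\cdot}$ gives $\P(\norm{\param^{(k)}-\param^*}_\infty\geq\rho)\leq (C^2/\rho^2)\Exp{\norm{\param^{(k)}-\param^*}^2}$, which tends to $0$ along $(k_j)$. This yields $d_{k_j}\to 0$, hence $d_\infty=0$ and $d_k\to 0$. The stated norm convergence then follows at once from $d_k\geq\frac{1}{2\delta}\Exp{\norm{\param^*-\param^{(k)}}^2}$, giving $\Exp{\norm{\param^*-\param^{(k)}}^2}\leq 2\delta\,d_k\to 0$. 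Note that finite dimensionality enters only through the norm equivalence and the boundedness of the $\ell_1$-subdifferential; it is precisely the sign-stability mechanism that lets me upgrade the subsequential norm convergence of \cref{thm:cvgc_norm} to full convergence in the stronger Bregman topology.
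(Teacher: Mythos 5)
Your proof is correct, and its skeleton coincides with the paper's: both arguments combine the summable-increment bound $d_{k+1}-d_k\leq c_k$ coming from \labelcref{ineq:decay_bregman_distance} and \cref{cor:square_sum} (with the same justification $\tau^{(k)}\leq\tfrac{\mu}{2\delta L^2}\leq\tfrac{2}{\delta L}$ via $\mu\leq L$) with the vanishing of $d_k$ along the subsequence $(k_j)$ from \cref{thm:cvgc_norm}, and then upgrade to convergence of the whole sequence by the standard quasi-monotonicity tail argument; your ``converge to $d_\infty$, then identify $d_\infty=0$'' is just a repackaging of the paper's $\eps/2$-argument. The one step you do genuinely differently is the crux you yourself flag: showing $d_{k_j}\to 0$. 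The paper bounds the $\ell_1$-Bregman term by a deterministic Lipschitz-type estimate, $D_\func^{\sg^{(k)}}(\param^*,\param^{(k)})=\func(\param^*)-\func(\param^{(k)})-\langle\sg^{(k)},\param^*-\param^{(k)}\rangle\leq \func(\param^*-\param^{(k)})+\norm{\sg^{(k)}}\norm{\param^*-\param^{(k)}}\leq (\sqrt{d}+d)\norm{\param^*-\param^{(k)}}$, using the triangle inequality, the finite-dimensional norm bound $\func(\param)\leq\sqrt{d}\norm{\param}$, and boundedness of $\ell_1$-subgradients; then $\Exp{\norm{\param^*-\param^{(k_j)}}}\to 0$ follows from the $L^2$-convergence by Jensen. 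You instead use the sign-stability mechanism (borrowed from the remark on \cref{ass:breg-convex}) together with an event split and Markov's inequality, obtaining $\Exp{D_\func^{\sg^{(k)}}(\param^*,\param^{(k)})}\leq \tfrac{2\norm{\param^*}_1 C^2}{\rho^2}\Exp{\norm{\param^*-\param^{(k)}}^2}$. Both are valid. The paper's estimate is more elementary and would work for any Lipschitz convex regularizer with bounded subgradients, not only the $\ell_1$-norm; yours is tied to the $\ell_1$ geometry (and to $\param^*$ having a positive minimal nonzero entry $\rho$), but in exchange it bounds $d_{k_j}$ by the mean-squared error itself rather than by its square root, which would give a sharper rate if one tracked constants.
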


Our second convergence theorem asserts convergence in the Bregman distance and gives quantitative estimates under \cref{ass:breg-convex}, which is a stricter assumption than \cref{ass:mu-convex} and relates the loss function $\empLoss$ with the regularizer; cf. \citet{dragomir2021fast} for a related approach working with $C^2$ functions.
The theorem states that the Bregman distance to the minimizer of the loss can be made arbitrarily small using constant step sizes.
For step sizes which go to zero and are not summable one obtains a quantitative convergence result.

\begin{theorem}[Convergence in the Bregman distance]\label{thm:cvgc_breg_dist}
\revision{Assume that \cref{ass:loss,ass:variance,ass:regularizer,ass:breg-convex} hold true and let $\delta>0$.}
The function $\mathcal{L}$ has a unique \revision{minimizer} $\param^*$ and if $\func(\param^*)<\infty$ the stochastic linearized Bregman iterations \labelcref{eq:stochlinbreg} satisfy the following:
\begin{itemize}
    \item \revision{Letting $d_k :=\E\left[D_{J_\delta}^{v^{(k)}}(\param^*,\param^{(k)})\right]$ it holds
    \begin{align}\label{eq:estimate_expects}
        d_{k+1} \leq \left[1 - \tau^{(k)}\nu\left(1-\tau^{(k)}\frac{2\delta^2 L^2}{\nu}\right)\right]d_k + \delta(\tau^{(k)})^2\sigma^2.
    \end{align}}
    \item For any $\eps>0$ there exists $\tau>0$ such that if $\tau^{(k)}=\tau$ for all $k\in\N$ then
    \begin{align}
        \limsup_{k\to\infty} d_k \leq \eps.
    \end{align}
    \item If $\tau^{(k)}$ is such that
    \begin{align}\label{eq:cond_step_size}
        \lim_{k\to\infty} \tau^{(k)} = 0 \quad \text{and}\quad \sum_{k=0}^\infty \tau^{(k)} = \infty
    \end{align}
    then it holds
    \begin{align}
        \lim_{k\to\infty} d_k = 0.
    \end{align}
\end{itemize}
\revision{Here, $\func_\delta$ is defined as in \labelcref{eq:elastic_net}.}
\end{theorem}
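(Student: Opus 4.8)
The plan is to derive the one-step recursion \labelcref{eq:estimate_expects} for $d_k$ first, and then read off the two asymptotic claims by elementary arguments about scalar recursions. Throughout I write $\phi:=\func_\delta$, and I use that the iteration \labelcref{eq:stochlinbreg} produces subgradients $v^{(k)}\in\partial\phi(\param^{(k)})$ via the optimality condition \labelcref{eq:v_subgrad}, and that $\param^*$ is the unique minimizer of the differentiable loss $\empLoss$, so that $\nabla\empLoss(\param^*)=0$. The first step is to establish a three-point identity for the Bregman distance of $\phi$. Expanding the definition \labelcref{eq:bregman_distance} of $D^{v^{(k+1)}}_{\phi}(\param^*,\param^{(k+1)})$ and $D^{v^{(k)}}_{\phi}(\param^*,\param^{(k)})$, cancelling the $\phi$-values, and inserting $v^{(k)}-v^{(k+1)}=\tau^{(k)}g^{(k)}$ yields
\begin{align*}
D^{v^{(k+1)}}_{\phi}(\param^*,\param^{(k+1)})
= D^{v^{(k)}}_{\phi}(\param^*,\param^{(k)})
- D^{v^{(k)}}_{\phi}(\param^{(k+1)},\param^{(k)})
+ \tau^{(k)}\langle g^{(k)},\param^{(k)}-\param^{(k+1)}\rangle
+ \tau^{(k)}\langle g^{(k)},\param^*-\param^{(k)}\rangle .
\end{align*}
The two inner products are the crucial terms: the last one becomes the contraction, the middle one the noise.

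Next I would take the conditional expectation given $\param^{(k)}$. Unbiasedness of $g^{(k)}$ turns the last inner product into $\langle\nabla\empLoss(\param^{(k)}),\param^*-\param^{(k)}\rangle$, and applying \cref{ass:breg-convex} with $\tilde\param=\param^*$ bounds this by
\begin{align*}
\empLoss(\param^*)-\empLoss(\param^{(k)})-\nu\,D^{v^{(k)}}_{\phi}(\param^*,\param^{(k)})\leq -\nu\,D^{v^{(k)}}_{\phi}(\param^*,\param^{(k)}),
\end{align*}
where the last inequality uses that $\param^*$ minimizes $\empLoss$. This is what produces the contraction factor $1-\nu\tau^{(k)}$ and is the only place where the convexity coupling enters.

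For the remaining, quadratic-in-$\tau^{(k)}$ contributions I would discard the nonpositive term $-D^{v^{(k)}}_\phi(\param^{(k+1)},\param^{(k)})\leq 0$ and estimate the middle inner product by Cauchy--Schwarz combined with the nonexpansiveness of the proximal operator: since $\param^{(k+1)}=\prox{\delta\func}(\delta v^{(k+1)})$, $\param^{(k)}=\prox{\delta\func}(\delta v^{(k)})$, and $\prox{\delta\func}$ is $1$-Lipschitz, one gets $\norm{\param^{(k)}-\param^{(k+1)}}\leq\delta\tau^{(k)}\norm{g^{(k)}}$, hence $\tau^{(k)}\langle g^{(k)},\param^{(k)}-\param^{(k+1)}\rangle\leq\delta(\tau^{(k)})^2\norm{g^{(k)}}^2$. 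To control $\Exp{\norm{g^{(k)}}^2}$ I would split it into the squared true gradient plus the variance, bounding the latter by $\sigma^2$ via \cref{ass:variance} and the former by $\norm{\nabla\empLoss(\param^{(k)})}=\norm{\nabla\empLoss(\param^{(k)})-\nabla\empLoss(\param^*)}\leq L\norm{\param^{(k)}-\param^*}$; finally the splitting \cref{lem:sum_breg_dist} gives $\norm{\param^{(k)}-\param^*}^2\leq 2\delta\,D^{v^{(k)}}_\phi(\param^*,\param^{(k)})$, which converts the squared gradient back into a multiple of $d_k$. Collecting all terms and taking total expectation yields exactly the recursion \labelcref{eq:estimate_expects}.

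With the recursion established the last two statements are elementary. For a constant step $\tau^{(k)}=\tau$ with $0<\tau<\nu/(2\delta^2L^2)$ the bracket $q:=1-\tau\nu+2\delta^2L^2\tau^2$ lies in $(0,1)$, so iterating $d_{k+1}\leq q\,d_k+\delta\tau^2\sigma^2$ gives $\limsup_{k\to\infty}d_k\leq \delta\tau^2\sigma^2/(1-q)=\delta\tau\sigma^2/(\nu-2\delta^2L^2\tau)$, which tends to $0$ as $\tau\to0$; choosing $\tau$ small makes it at most $\eps$. For $\tau^{(k)}\to0$ with $\sum_k\tau^{(k)}=\infty$ I would write the recursion as $d_{k+1}\leq(1-a_k)d_k+b_k$ with $a_k:=\tau^{(k)}\nu(1-\tau^{(k)}2\delta^2L^2/\nu)$ and $b_k:=\delta(\tau^{(k)})^2\sigma^2$, observe that eventually $a_k\in(0,1)$, that $\sum_k a_k=\infty$ since $a_k\sim\nu\tau^{(k)}$, and that $b_k/a_k\to0$, and then conclude $d_k\to0$ from a standard lemma on such recursions (a discrete Grönwall/Chung-type argument). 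The main obstacle is the bookkeeping that makes the two $\tfrac{1}{2\delta}\norm{\cdot}^2$-type contributions cooperate rather than accumulate: one must realize that the dropped Bregman term and the prox estimate together never require any lower bound on $D^{v^{(k)}}_\phi(\param^{(k+1)},\param^{(k)})$, and that strong convexity is used solely to turn $\langle\nabla\empLoss(\param^{(k)}),\param^*-\param^{(k)}\rangle$ into $-\nu d_k$. Matching the constants so that the bracket reads precisely $1-\tau^{(k)}\nu(1-\tau^{(k)}2\delta^2L^2/\nu)$ takes care but needs no idea beyond the four ingredients above.
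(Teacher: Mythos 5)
Your proposal is correct and follows essentially the same route as the paper's proof: your three-point identity is exactly the decomposition behind \cref{lem:iterates} (dropping the nonpositive backward Bregman distance $-D^{v^{(k)}}_{\func_\delta}(\param^{(k+1)},\param^{(k)})$ and keeping the inner product $\tau^{(k)}\langle g^{(k)},\param^{(k)}-\param^{(k+1)}\rangle$ is algebraically equivalent to the paper's bound of $D^{v^{(k+1)}}_{\func_\delta}(\param^{(k)},\param^{(k+1)})$ by the symmetric Bregman distance), and all remaining ingredients coincide---\cref{ass:breg-convex} applied at $\tilde\param=\param^*$, nonexpansiveness of $\prox{\delta\func}$, the variance split of $\Exp{\norm{g^{(k)}}^2}$, Lipschitzness of $\nabla\empLoss$ together with $\nabla\empLoss(\param^*)=0$, and \cref{lem:sum_breg_dist}. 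The scalar-recursion arguments for the two asymptotic claims also match: the paper proves your ``standard lemma'' for the third item inline via the positive-part trick $(d_k-\eps)_+\leq\prod_l(1-\tau^{(l)}\tilde\nu)(d_{k_0}-\eps)_+$ and the divergence of $\sum_k\tau^{(k)}$.
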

\begin{corollary}[Convergence rate for diminishing step sizes]
\revision{%
The error recursion \labelcref{eq:estimate_expects} coincides with the one for stochastic gradient descent.
In particular, for step sizes of the form $\tau^{(k)}=\tfrac{c}{k}$ with a suitably small constant $c>0$ one can prove with induction \citep{nemirovski2009robust} that $d_k = \frac{C}{k}$ for some $C>0$.}
\end{corollary}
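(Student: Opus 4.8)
The plan is to treat the expected-error recursion \labelcref{eq:estimate_expects} as a scalar stochastic-approximation recursion and to close the claimed $\mathcal{O}(1/k)$ rate by the textbook induction of \citet{nemirovski2009robust}. First I would insert $\tau^{(k)}=c/k$ into the contraction factor $1-\tau^{(k)}\nu\bigl(1-\tau^{(k)}\tfrac{2\delta^2L^2}{\nu}\bigr)$. Since this factor is only useful once it lies in $[0,1)$, I would fix an index $k_0$ (depending on $c,\nu,\delta,L$) beyond which $\tau^{(k)}=c/k\leq\tfrac{\nu}{4\delta^2L^2}$, so that $1-\tau^{(k)}\tfrac{2\delta^2L^2}{\nu}\geq\tfrac12$. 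For $k\geq k_0$ this reduces \labelcref{eq:estimate_expects} to the canonical form
\begin{align*}
    d_{k+1}\leq\left(1-\frac{a}{k}\right)d_k+\frac{b}{k^2},\qquad a:=\frac{c\nu}{2},\quad b:=\delta c^2\sigma^2,
\end{align*}
which is exactly the recursion obeyed by the iterates of strongly convex stochastic gradient descent; this is the content of the first sentence of the corollary.

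Next I would prove $d_k\leq C/k$ for all $k\geq k_0$ by induction on $k$. The base case amounts to choosing $C\geq k_0\,d_{k_0}$, which is finite by \cref{thm:cvgc_breg_dist}. For the inductive step I would substitute the hypothesis $d_k\leq C/k$ into the reduced recursion, obtaining $d_{k+1}\leq \tfrac{C}{k}-\tfrac{aC-b}{k^2}$, and compare this with $\tfrac{C}{k+1}=\tfrac{C}{k}-\tfrac{C}{k(k+1)}$. After cross-multiplying (all quantities being positive), the desired bound $d_{k+1}\leq C/(k+1)$ is implied by the purely algebraic inequality $\bigl((a-1)C-b\bigr)k+(aC-b)\geq0$, which holds for every $k$ provided $a>1$ and $C\geq \tfrac{b}{a-1}$. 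Hence the choice $C:=\max\bigl\{k_0 d_{k_0},\,\tfrac{b}{a-1}\bigr\}$ closes the induction and yields $d_k\leq C/k$ for $k\geq k_0$; together with the finitely many initial indices this gives the global bound $d_k=\mathcal{O}(1/k)$.

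The main obstacle, and the only genuinely non-routine point, is the condition $a>1$, i.e.\ $\tfrac{c\nu}{2}>1$. This is the well-known requirement in classical stochastic approximation that the step-size constant be calibrated to the strong-convexity modulus: the constant $c$ must be chosen large enough relative to $1/\nu$ for the induction to close with the sharp exponent $1$, whereas a mis-sized $c$ only delivers the slower rate $\mathcal{O}(k^{-a})$. A secondary, purely bookkeeping difficulty is the quadratic correction $\tau^{(k)}\tfrac{2\delta^2L^2}{\nu}$ inside the contraction factor; rather than discarding it via the crude bound above, one may instead retain it and absorb the extra contribution $\tfrac{2\delta^2L^2c^2}{k^2}d_k=\mathcal{O}(k^{-3})$ into the term $b/k^2$, at the cost of slightly enlarging $b$ and $k_0$, which does not affect the leading order. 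Finally, I would emphasize that since the reduced recursion is identical to that of stochastic gradient descent, no feature of the Bregman geometry enters the rate itself: it is encoded entirely in the quantity $d_k=\E\bigl[D_{\func_\delta}^{v^{(k)}}(\param^*,\param^{(k)})\bigr]$ furnished by \cref{thm:cvgc_breg_dist}.
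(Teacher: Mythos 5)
Your proposal is correct and carries out precisely the induction that the paper delegates to \citet{nemirovski2009robust}: reduce \labelcref{eq:estimate_expects} past a finite index $k_0$ to the canonical recursion $d_{k+1}\leq(1-a/k)d_k+b/k^2$ and close the $\mathcal{O}(1/k)$ bound inductively with $C=\max\{k_0 d_{k_0},\,b/(a-1)\}$. Your flagged condition $a=c\nu/2>1$ is also right and in fact sharpens the paper's loose phrasing ``suitably small constant $c$'': the rate $\mathcal{O}(1/k)$ requires $c$ to be \emph{large} enough relative to $1/\nu$ (a too-small $c$ only yields $\mathcal{O}(k^{-a})$), while smallness of the steps $\tau^{(k)}$ is needed only eventually, which your choice of $k_0$ supplies.
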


\comment{
We first need the following lemma which states that the Bregman distance w.r.t. the functional $J_\delta$ is topologically equivalent to the Euclidean norm under some conditions on the functional $J$ which are satisfies, e.g., for the 1-norm.
\begin{lemma}\label[lemma]{lem:equiv_breg_dist}
Assume that $J$ is absolutely one-homogeneous and there exists a constant $C_1>0$ such that $J(\param)\leq C_1\norm{\param}$ for all $\param\in\Param$.
Furthermore, assume that there is a constant $C_2>0$ such that $\norm{\eta}\leq C_2$ for all $\param\in\Param$ and $\eta\in\partial J(\param)$.
Then it holds
\begin{align}
    \frac{1}{2\delta}\norm{\tilde\param - \param}^2 \leq D_{J_\delta}^\sg(\tilde\param,\param) \leq C\norm{\tilde\param-\param} + \frac{1}{2\delta}\norm{\tilde\param - \param}^2,\quad\forall\tilde\param,\param\in\Param,\;\sg\in\partial J_\delta(\param),
\end{align}
where the constant $C>0$ is given by $C=C_1+C_2$.
\end{lemma}
\begin{proof}
Since $\partial J_\delta(\param) = \partial J(\param) + \partial\frac{1}{2\delta}\norm{\param}^2$, we can write $\sg=\eta+\frac{1}{\delta}\param$ with $\eta\in\partial J(\param)$.
This readily yields
\begin{align*}
    D_{J_\delta}^\sg(\tilde\param,\param) = D_J^\eta(\tilde\param,\param) + \frac{1}{2\delta}\norm{\tilde\param-\param}^2,
\end{align*}
which implies the first inequality since Bregman distances are non-negative.

For the second inequality we observe that, because of the assumptions on $J$, it holds
\begin{align*}
    D_J^\eta(\tilde\param,\param) 
    &= J(\tilde\param) - J(\param) - \langle\eta,\tilde\param-\param\rangle \\
    &\leq J(\tilde\param-\param) + \norm{\eta}\norm{\tilde\param-\param} \\
    &\leq C_1\norm{\tilde\param-\param} + C_2 \norm{\tilde\param-\param}\\
    &= C\norm{\tilde\param-\param}.
\end{align*}
Here we used that absolutely one-homogeneous functionals admit the triangle inequality $J(\tilde\param-\param)\leq J(\tilde\param-\param)\leq J(\tilde\param) + J(\param)$ according to \citet{burger2016spectral,bungert2019nonlinear}.
This concludes the proof.
\end{proof}}

\comment{
The next lemma asserts an a-priori boundedness for the iterates $\param^{(k)}$ of the linearized Bregman iteration.
\begin{lemma}\label[lemma]{lem:boundedness}
Assume that $\mathcal{L}$ is $L$-smooth and $\mu$-strongly convex. 
Then for all $\param\in\Param$ it holds that $\norm{\param^{(k)}-\param}$ is uniformly bounded.
\end{lemma}
\begin{proof}
The $\mu$-strong convexity and Young's inequality imply that for all $\param\in\Param$ and all $\eps>0$ it holds
\begin{align*}
    \frac{\mu}{2}\norm{\param^{(k)}-\param}^2 \leq \empLoss(\param^{(k)}) - \empLoss(\param) + \frac{\norm{\nabla \empLoss(\param)}^2}{2\eps} + \frac{\eps}{2}\norm{\param^{(k)}-\param}^2.
\end{align*}
From \cref{thm:decreasing_loss} we now that the loss decreases because it is $L$-smooth which yields
\begin{align*}
    \frac{\mu-\eps}{2}\norm{\param^{(k)}-\param}^2 \leq \empLoss(\param^{(0)}) - \empLoss(\param) + \frac{\norm{\nabla \empLoss(\param)}^2}{2\eps}.
\end{align*}
Since the right hand side is independent of $k$ the statement follows when choosing $\eps < \mu$.
\end{proof}}

\section{Numerical Experiments}\label{sec:experiments}

In this section we perform an extensive evaluation of our algorithms focusing on different characteristics. 
First, we derive a sparse initialization strategy in \cref{sec:initialization}, using similar statistical arguments as in the seminal works by \citet{bengio10,he2015delving}.
In \cref{sec:comparison,sec:accelerations} we study the influence of hyperparameters and compare our family of algorithms with standard stochastic gradient descent (SGD) and the sparsity promoting proximal gradient descent method.
In \cref{sec:CNNs,sec:resnets} we demonstrate that our algorithms generate sparse and expressive networks for solving the classification task on Fashion-MNIST and CIFAR-10, for which we utilize state-of-the-art CNN and ResNet architectures.
Finally, in \cref{sec:autoencoder} we show that, using row sparsity, our Bregman learning algorithm allows to discover a denoising autoencoder architecture, which shows the potential of the method for architecture design.
Our code is available on \texttt{GitHub} at \url{https://github.com/TimRoith/BregmanLearning} \revision{and relies on \texttt{PyTorch} \cite{NEURIPS2019_9015}}.
%
%
\subsection{Initialization}
\label{sec:initialization}
Parameter initialization for neural networks has a crucial influence on the training process, see \citet{bengio10}. 
In order to tackle the problem of vanishing and exploding gradients, standard methods consider the variance of the weights at initialization \citep{bengio10,he16}, assuming that for each $l$ the entries $W^l_{i,j}$ are i.i.d.~with respect to a probability distribution. 
The intuition here is to preserve the variances over the forward and the backward pass similar to the variance of the respective input of the network, see \citet[Sec.~4.2]{bengio10}.
If the distribution satisfies $\Exp{W^l}=0$, this yields a condition of the form
\begin{align}\label{eq:initcond}
\Var{W^l} = \alpha(n_l,n_{l-1})
\end{align}
where the function $\alpha$ depends on the activation function. 
For anti-symmetric activation functions with $\sigma^{\prime}(0)=1$, as for instance a sigmoidal function, it was shown by \citet{bengio10} that 
\begin{align*}
\alpha(n_l,n_{l-1})=\frac{2}{n_l\cdot n_{l-1}}
\end{align*} 
while for ReLU \citet{he16} suggest to use
\begin{gather*}
\alpha(n_l,n_{l-1})=\frac{2}{n_l}\quad\text{ or }\quad
\alpha(n_l,n_{l-1})=\frac{2}{n_{l-1}}.
\end{gather*}
For our Bregman learning framework we have to adapt this argumentation, taking into account sparsity.
For classical inverse scale space approaches and Bregman iterations of convex losses, as for instance used for image reconstruction \citep{osher2005iterative} and compressed sensing \citep{yin2008bregman,burger2013adaptive,osher2010fast,cai2009linearized}, one would initialize all parameters equal to zero. 
However, for neural networks this yields an unbreakable symmetry of the network parameters, which makes training impossible, see, e.g., \citet[Ch. 6]{Goodfellow16}. 
Instead, we employ an established approach for sparse neural networks (see, e.g., \citet{liu2021,dettmers2019sparse,martens2010deep}) which masks the initial parameters, i.e.,
\begin{align*}
W^l := \tilde W^{l} \odot M^l.
\end{align*}
Here, the mask $M^l\in\R^{n_l,n_{l-1}}$ is chosen 
randomly such that each entry is distributed according to the Bernoulli distribution with a parameter $r\in[0,1]$, i.e.,
\begin{align*}
M^l_{i,j}\sim \mathcal{B}(r).
\end{align*}
The parameter $r$ coincides with the expected percentage of non-zero weights
\begin{align}\label{eq:pnnz}
\mathrm{N}(W^l):=\frac{\norm{W^l}_0}{n_l \cdot n_{l-1}}=
1 - \mathrm{S}(W^l),
\end{align}
where $\mathrm{S}(W^l)$ denotes the sparsity. 
In the following we derive a strategy to initialize $\tilde W^l$.
Choosing $\tilde W^l$ and $M^l$ independent and using $\Exp{\tilde W^l}=0$ standard variance calculus implies
\begin{align*}
\Var{W^l} &= 
\Var{\tilde W^l\odot M^l}\\&= 
\Exp{M^l}^2~\Var{\tilde W^l} + 
\underbrace{\Exp{\tilde W^l}^2~\Var{M^l}}_{=0}+
\Var{M^l}~\Var{\tilde W^l}\\&=
\left(\Exp{M^l}^2 + \Var{M^l}\right)\Var{\tilde W^l}\\&=
\Exp{(M^l)^2}~\Var{\tilde W^l}\\&= 
r~\Var{\tilde W^l}
\end{align*}
and thus deriving from \labelcref{eq:initcond} we obtain the condition
\begin{align}\label{eq:initcond2}
\Var{\tilde W^l} = \frac{1}{r}~\alpha(n_l,n_{l-1}).
\end{align}
Instead of having linear feedforward layers with corresponding weight matrices, the neural network architecture at hand might consist of other groups of parameters which one would like to keep sparse, e.g., using the group sparsity regularization \labelcref{eq:1-2-norm}.
For example, in a convolutional neural network one might be interested in having only a few number of non-zero convolution kernels in order to obtain compact feature representations of the input. 
Similarly, for standard feedforward architectures sparsity of rows of the weight matrices yields compact networks with a small number of active neurons. 
In such cases one can apply the same arguments as above and initialize single elements $\mathbf{g}\in\mathcal{G}$ of a parameter group $\mathcal{G}$ as non-zero with probability $r\in[0,1]$ and with respect to a variance condition similar to \labelcref{eq:initcond2}:
\begin{align}
    \mathbf{g} &= \mathbf{\tilde g} \cdot m ,\quad m\sim\mathcal{B}(r),\\
    \Var{\mathbf{\tilde g}} &= \frac{1}{r} \alpha(\mathbf{g})
\end{align}
Note that these arguments are only valid in a linear regime around the initialization, see \citet{bengio10} for details. 
Hence, if one initializes with sparse parameters but optimizes with very weak regularization, e.g., by using vanilla SGD or choosing $\lambda$ in \labelcref{eq:1-norm} or \labelcref{eq:1-2-norm} very small, the assumption is violated since the sparse initial parameters are rapidly filled during the first training steps. 

The biases are initialized non-sparse and the precise strategy depends on the activation function.
We would like to emphasize that initializing biases with zero is not a 
good idea in the context of sparse neural networks.
In this case, the neuron activations would be equal for all ``dead neurons'' whose incoming weights are zero, which would then yield an unbreakable symmetry.
For ReLU we initialize biases with positive random values to ensure a flow of information and to break symmetry also for dead neurons, which is similar to the strategy proposed by \citet[Ch. 6]{Goodfellow16}.
For other activation functions $\sigma$ which meet $\sigma(x)=0$ if and only if $x=0$, e.g., TanH, Sigmoid, or Leaky ReLU, biases can be initialized with random numbers of arbitrary sign.

%
%
\subsection{Comparison of Algorithms}\label{sec:comparison}
\begin{figure}[htb]
\centering
\includegraphics[width=\textwidth,
                trim=0.2cm 0.0cm 0.2cm 0.0cm,clip]{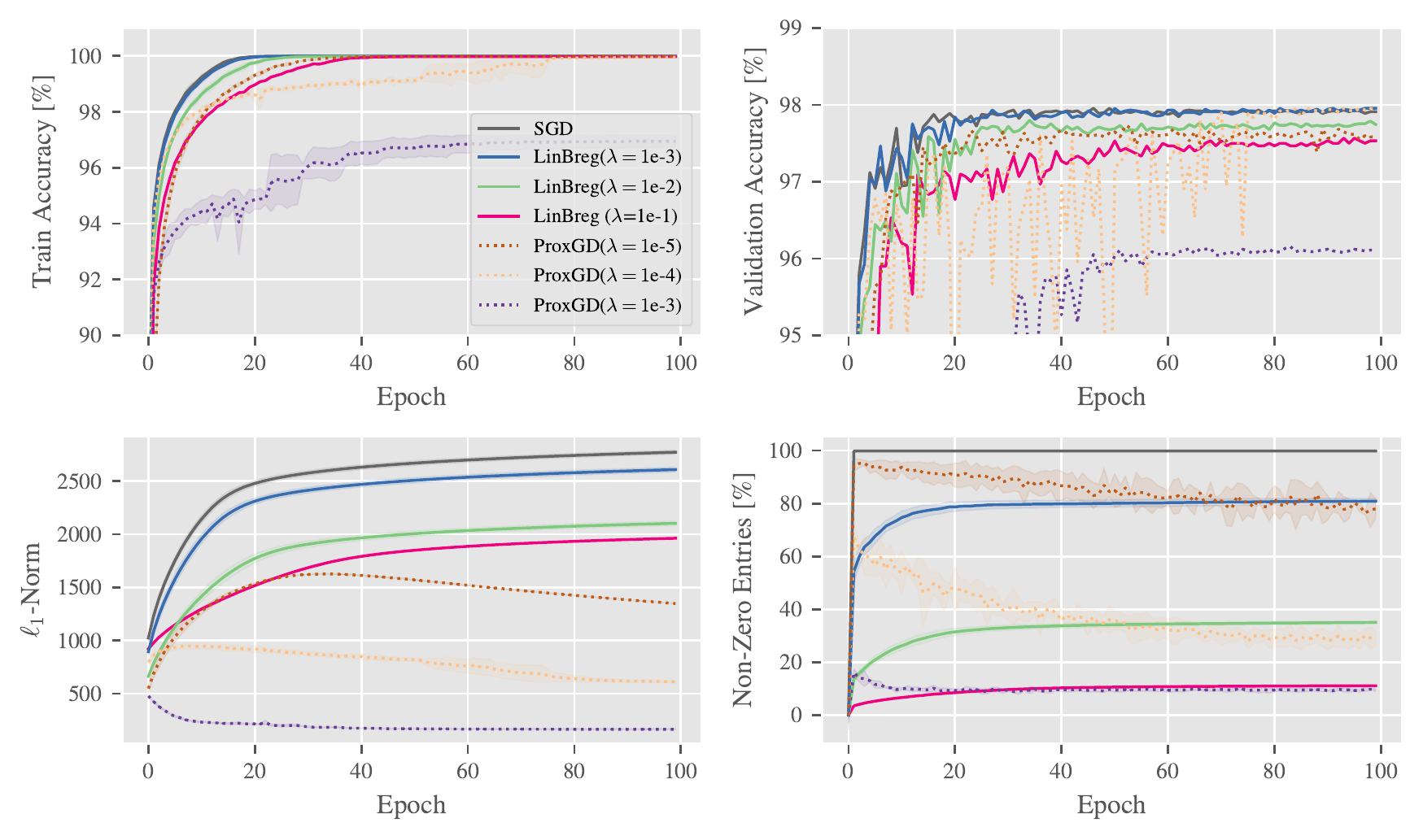}%
\caption{Comparison of vanilla SGD (black solid line), \LinBreg{} (colored solid lines), and ProxGD (colored dotted lines) for different regularization parameters on MNIST.
The curves show the averaged accuracies on train and validation sets, $\ell_1$-norms, and non-zero entries over three runs. The shaded area visualizes the standard deviation.}
\label{fig:fc_net}
\end{figure}
We start by comparing the proposed \LinBreg{} \cref{alg:proximal_bregman_training} with vanilla stochastic gradient descent (SGD) without sparsity regularization and with the Lasso-based approach from \citet{scardapane2017group}, for which we compute solutions to the sparsity-regularized risk minimization problem \labelcref{eq:reg_emp_risk} using the proximal gradient descent algorithm (ProxGD) from \labelcref{eq:proxGD}.

We consider the classification task on the MNIST dataset \citep{leCun10} for studying the impact of the hyperparameters of these methods. 
The set consists of $60,000$ images of handwritten digits which we split into $55,000$ images used for the training and $5,000$ images used for a validation process during training.
We train a fully connected net with \revision{ReLU activations and} two hidden layers ($200$ and $80$ neurons), and use the $\ell_1$-regularization from \labelcref{eq:1_norm_ffwd},
\begin{align*}
J(\param) = \lambda\sum_{l=1}^L \norm{W^l}_{1,1}
\end{align*}
In \cref{fig:fc_net} we compare the training results of vanilla SGD, the proposed \LinBreg{}, and the ProxGD algorithm.
Following the strategy introduced in \cref{sec:initialization} we initialize the weights with $1\%$ non-zero entries, i.e., $r=0.01$.
The learning rate is chosen as $\tau=0.1$ and is multiplied by a factor of $0.5$ whenever the validation accuracy stagnates. 
For a fair comparison the training is executed for three different fixed random seeds, and the plot visualizes mean and standard deviation of the three runs, respectively. 
We show the training and validation accuracies, the $\ell_1$-norm, and the overall percentage of non-zero weights.
Note that for the validation accuracies we do not show standard deviations for the sake of clarity.

While SGD without sparsity regularization instantaneously destroys sparsity, \LinBreg{} exhibits the expected inverse scale space behaviour, where the number of non-zero weights gradually grows during training, and the train accuracy increases monotonously.
\revision{This is suggested by \cref{thm:decreasing_loss}, even though our experimental setup, in particular the non-smooth ReLU activation functions, is not covered by the theoretical framework which require at least $L$-smoothness of the loss functions.}

In contrast, ProxGD shows no monotonicity of training accuracy or sparsity and the validation accuracies oscillate heavily.
Instead, it adds a lot of non-zero weights in the beginning and then gradually reduces them.
Obviously, the regularized empirical risk minimization \labelcref{eq:reg_emp_risk} implies a trade-off between training accuracy and sparsity which depends on $\lambda$.
For \LinBreg{} this trade-off is neither predicted by theory nor observed numerically. 
Here, the regularization parameter only induces a trade-off between \emph{validation} accuracy and sparsity, which is to be expected.

\LinBreg{} (blue curves) can generate networks whose validation accuracy equals the one of a full network and use only $80\%$ of the weights.
For the largest regularization parameter (magenta curves) \LinBreg{} uses only $10\%$ of the weights and still does not drop more than half a percentage point in validation accuracy.
%
%
\subsection{Accelerated Bregman Algorithms}\label{sec:accelerations}

\begin{figure}[htb]
\centering
\includegraphics[width=\textwidth]{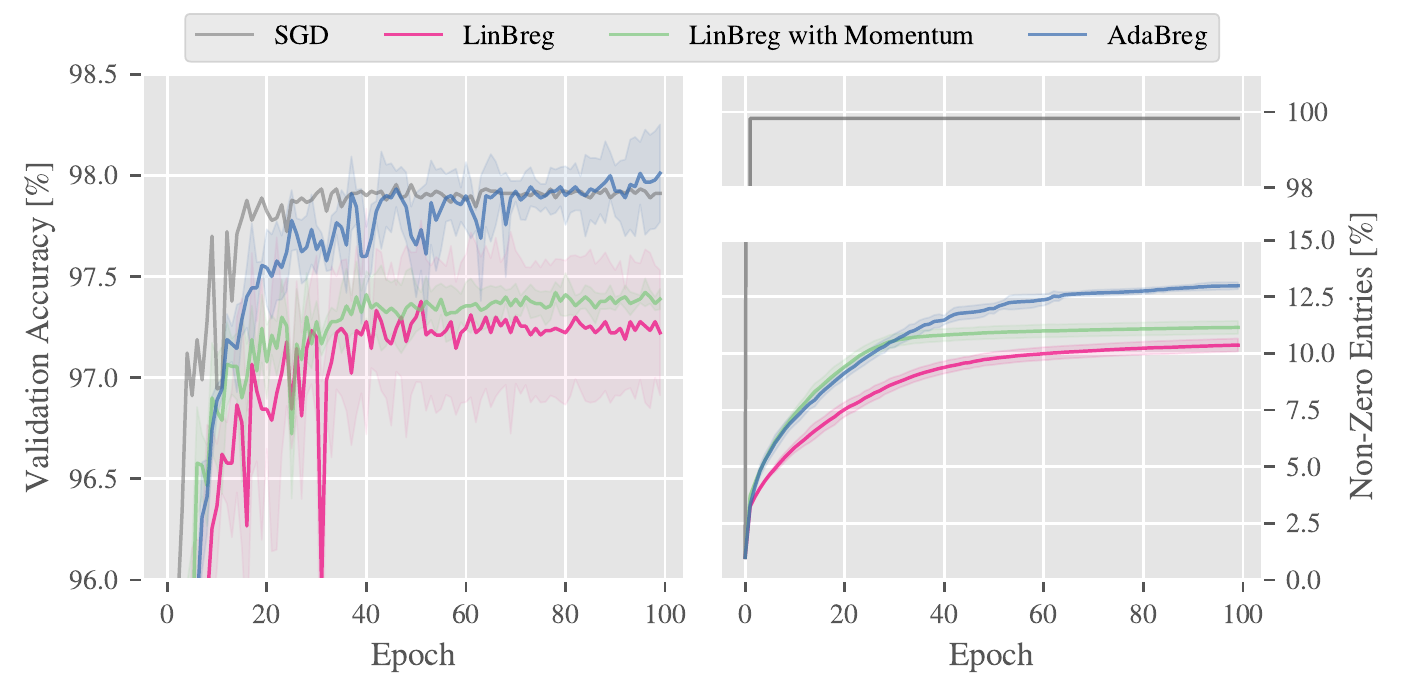}
\caption{Comparison of \LinBreg{} (with momentum), \AdaBreg{}, and vanilla SGD. The networks generated by \AdaBreg{} are sparse and generalize better.}
\label{fig:momentum_comparison}
\end{figure}

We use the same setup as in the previous section to compare \LinBreg{} with its momentum-based acceleration and \AdaBreg{} (see \cref{alg:proximal_bregman_training,alg:proximal_bregman_training_momentum,alg:proximal_bregman_training_adam}).
Using the regularization parameter $\lambda=10^{-1}$ from the previous section (see the magenta curves), \cref{fig:momentum_comparison} shows \revision{the validation accuracy of the different networks trained with} \LinBreg{}, \LinBreg{} with momentum, and \AdaBreg{}.
For comparison we visualize again the results of vanilla SGD as gray curve.
It is obvious that all three proposed algorithms generate very accurate networks using approximately $10\%$ of the weights. 
As expected, the accelerated versions increase both the validation accuracy and the number of non-zero parameters faster than the baseline algorithm \LinBreg{}.
While after 100 epochs \LinBreg{} and its momentum version have slightly lower validation accuracies than the non-sparse networks generated by SGD, \AdaBreg{} outperforms the other algorithms including SGD in terms of validation accuracy while maintaining a high degree of sparsity.
%
%
\subsection{Sparsity for Convolutional Neural Networks (CNNs)}\label{sec:CNNs}

In this example we apply our algorithms to a convolutional neural network of the form 
\begin{align*}
\text{$5\times 5$ conv, 64} \overset{\text{Maxpool$/2$}}{\longrightarrow} 
\text{$5\times 5$ conv, 64} \overset{\text{Maxpool$/2$}}{\longrightarrow}
\text{fc 1024} \longrightarrow 
\text{fc 128}\longrightarrow
\text{fc 10}
\end{align*}
with ReLU activations to solve the classification task on Fashion-MNIST.
We run experiments both for sparsity regularization utilizing the $\ell_1$-norm \labelcref{eq:1-norm} and for a combination of the $\ell_1$-norm on the linear layers and the group $\ell_{1,2}$-norm \labelcref{eq:1-2-norm} on the convolutional kernels.
This way, we aim to obtain compressed network architectures with only few active convolutional filters.

The inverse scale space character of our algorithms is visualized in \cref{fig:kernels} from the beginning of the paper which shows the 64 feature maps of an input image, generated by the first convolutional layer of the network after $0$, $5$, $20$, and $100$ epochs of \LinBreg{} with group sparsity.
One can observe that gradually more kernels are added until iteration $20$, from where on the number of kernels stays fixed and the kernels themselves are optimized.

\cref{tab:F-MNIST_simple_sparsity,tab:F-MNIST_group_sparsity} shows the test and training accuracies as well as sparsity levels.
For plain sparsity regularization we only show the total sparsity level of all network parameters whereas for the group sparsity regularization we show the sparsity of the linear layers and the relative number of non-zero convolutional kernels.

A convolutional layer for a input $z\in\R^{c_{l-1},n_{l-1},m_{l-1}}$ is given as 
\begin{align*}
    \Phi^l_j(z) = b_j + \sum_{i=1}^{c_{l-1}} K^{l}_{i,j}\ast z_{i,\bullet},
\end{align*}
where $K^l_{i,j}\in\R^{k,k}$ denote kernel matrices with corresponding biases $b_j\in\R^{n_l,m_l}$ for in-channels $i\in\{1,\ldots,c_{l-1}\}$ and out-channels $j\in\{1,\ldots,c_{l}\}$. 
Therefore, we denote by
\begin{align*}
    \mathrm{N}_{\mathrm{conv}}:=\frac{\sum_{l\in I_{\mathrm{conv}}}
    \#\{K^l_{i,j}: K^l_{i,j}\neq \mathbf{0}\}}
    {\sum_{l\in I_{\mathrm{conv}}} c_l\cdot c_{l-1}}
\end{align*}
the percentage of non-zero kernels of the whole net where $I_{\mathrm{conv}}$ denotes the index set of the convolutional layers. 
Analogously, using a similar term as in \labelcref{eq:pnnz} we denote by 
\begin{align*}
    \mathrm{N}_{\mathrm{linear}}:=\frac{\sum_{l\in I_{\mathrm{linear}}}
    \norm{W^l}_0}
    {\sum_{l\in I_{\mathrm{linear}}} n_l\cdot n_{l-1}}
\end{align*}
the percentage of weights used in the linear layers.
Finally, we define $\mathrm{N}_\mathrm{total}:=\mathrm{N}_\mathrm{conv} + \mathrm{N}_\mathrm{linear}$.

We compare our algorithms \LinBreg{} (with momentum) and \AdaBreg{} against vanilla training without sparsity, iterative pruning \citep{han2015learning}, and the Group Lasso approach from \citet{scardapane2017group}, and train all networks to a comparable sparsity level, given in brackets.
The pruning scheme is taken from \citet{han2015learning}, where in each step a certain amount of weights is pruned, followed by a retraining step. 
For our experiment the amount of weights pruned in each iteration was chosen, so that a specified target sparsity is met.
For the Group Lasso approach, which is based on the regularized risk minimization \labelcref{eq:reg_emp_risk}, we use two different optimizers.
First, we apply SGD applied to the \labelcref{eq:reg_emp_risk} and apply thresholding afterwards to obtain sparse weights, which is the standard approach in the community (cf.~\citet{scardapane2017group}).
Second, we apply proximal gradient descent~\labelcref{eq:proxGD} to \labelcref{eq:reg_emp_risk} which yields \revision{sparse} solutions without need for thresholding.
Our Bregman algorithms were initialized with $1\%$ non-zero parameters, following the strategy from \cref{sec:initialization}, all other algorithms were initialized non-sparse using standard techniques \citep{bengio10,he2015delving}.   
\revision{For all algorithms we tuned the hyperparameters (e.g., pruning rate, regularization parameter for Group Lasso and Bregman) in order to achieve comparable sparsity levels.

Note that it is non-trivial to compare different algorithms for sparse training since they optimize different objectives, and both the sparsity, the train, and the test accuracy of the resulting networks matter.
Therefore, we show results whose sparsity levels and accuracies are in similar ranges.
}

\cref{tab:F-MNIST_simple_sparsity} shows that all algorithms manage to compute very sparse networks with ca. $2\%$ drop in test accuracy on Fasion-MNIST, compared to vanilla dense training with Adam.
Note that we optimized the hyperparameters (regularization and thresholding parameters) of all algorithms for optimal performance on a validation set, subject to having comparable sparsity levels.
Our algorithms \LinBreg{} and \AdaBreg{} yield sparser networks with the same accuracies as Pruning and Lasso.

Similar observations are true for \cref{tab:F-MNIST_group_sparsity} where we used group sparsity regularization on the convolutional kernels.
Here all algorithms apart from pruning yield similar results, whereas pruning exhibits a significantly worse test accuracy despite using a larger number of non-zero parameters. 
\revision{The combination of SGD-optimized Group Lasso with subsequent thresholding yields the best test accuracy using a moderate sparsity level.}

As mentioned above the Lasso and Group Lasso results using SGD underwent an additional thresholding step after training in order to generate sparse solutions. 
Obviously, one could also do this with the results of ProxGD and Bregman which would further improve their sparsity levels.
However, in this experiment we refrain from doing so in order not to change the nature of the algorithms.
\begin{table}[htb]
\small%
\noindent%
\begin{tabularx}{\textwidth}{|R R||C|c c|}
Strategy & Optimizer & 
$\mathrm{N}_{\mathrm{total}}$ in [\%] & 
Test Acc& Train Acc\\
\hhline{|=====|}
Vanilla & Adam   &100&92.1&100.0\\
\hhline{-----}
\multirow{1}{*}{Pruning ($5\%$)}
        &SGD&4.7&89.2&92.0\\
\hhline{-----}
\multirow{2}{*}{Lasso}
        &SGD 
        + thresh. &3.5&90.1&94.7\\
        &ProxGD 
        &4.8&89.4&91.4\\
\hhline{-----}
\multirow{3}{*}{\bf Bregman}
                &\LinBreg{} 
                &1.9&89.2&91.1\\
                &\LinBreg{} ($\beta=0.9$) 
                &2.7&89.9&93.8\\
                %
                %
                %
                &\AdaBreg{}
                &2.3&{90.5}&93.6
                %
                %
                %
\end{tabularx}
\caption{Sparsity levels and accuracies on the Fashion-MNIST data set.}\label{tab:F-MNIST_simple_sparsity}
\end{table}
\begin{table}[htb]
\small%
\centering%
\begin{tabularx}{\textwidth}{|p{1.8cm} p{2.8cm}||c c|c c|}
Strategy & Optimizer & 
$\mathrm{N}_{\mathrm{linear}}$ in [\%] & 
$\mathrm{N}_{\mathrm{conv}}$ in [\%]&  Test Acc &Train Acc\\
\hhline{|======|}
Vanilla & Adam   &100&100&92.1&100.0\\
\hhline{------}
\multirow{1}{*}{Pruning ($7\%$)}
                        & SGD &7.0&6.5&86.9&89.9\\
\hhline{------}
\multirow{2}{*}{GLasso}
        &SGD + thresh. 
        &3.6&4.3&90.3&94.8\\
        &ProxGD 
        &3.0&3.7&89.8&91.6\\
\hhline{------}
\multirow{3}{*}{\shortstack{\bf Bregman}}
                        &\LinBreg{} 
                        &3.8 &4.2 &89.5&93.1\\
                        &\LinBreg{} ($\beta=0.9$) 
                        &3.5 &4.7 &89.9&93.5\\
                        &AdaBreg 
                        &3.5 &2.8 &89.4&92.6
\end{tabularx}
\caption{Group sparsity levels and accuracies on the Fashion-MNIST data set.}\label{tab:F-MNIST_group_sparsity}
\end{table}


\subsection{Residual Neural Networks (ResNets)}\label{sec:resnets}

In this experiment we trained a ResNet-18 architecture for classification on CIFAR-10, enforcing sparsity through the $\ell_1$-norm \labelcref{eq:1-norm} and comparing different strategies, as before.
\cref{tab:CIFAR} shows the resulting sparsity levels of the total number of parameters and the percentage of non-zero convolutional kernels as well as the train and test accuracies.
Note that even though we used the standard $\ell_1$ regularization \labelcref{eq:1-norm} and no group sparsity, the trained networks exhibit large percentages of zero-kernels.

For comparison we also show the unregularized vanilla results using SGD with momentum and Adam, which both use $100\%$ of the parameters.
The \LinBreg{} result with thresholding shows that one can train a very sparse network using only $3.4\%$ of all parameters with $3.4\%$ drop in test accuracy.
With \AdaBreg{} we obtain a sparsity level of $14.7\%$, resulting in a drop of only $1.3\%$.
\revision{The combination of Adam-optimized Lasso with subsequent thresholding yields a $3\%$ sparsity with a drop of $3.6\%$ in test accuracy, which is the sparsest result in this comparison.}

\begin{table}[htb]
\small
\begin{tabularx}{\textwidth}{|p{1.5cm} p{3.3cm}||c c|c c|}
Strategy & Optimizer & 
$\mathrm{N}_{\mathrm{total}}$ in [\%] & 
$\mathrm{N}_{\mathrm{conv}}$ in [\%] & 
Test Acc&Train Acc\\
\hhline{|======|}
\multirow{2}{*}{Vanilla}
        &SGD with momentum &100.0&100.0&92.15&99.8\%\\
        &Adam &100.0&100.0&93.6&100.0\%\\
\hhline{------}
\multirow{2}{*}{Lasso}
        &Adam &99.7&100.0&91.1 &100\\
        &Adam + thresh.&3.0&15.7&90.0 &99.8\\
\hhline{------}
\multirow{6}{*}{\bf{Bregman}}
                &\LinBreg{}
                &5.5&24.8&90.9&99.5\\
                &\LinBreg{} + thresh. 
                &3.4&16.9&90.2&99.4\\
                &\LinBreg{} ($\beta=0.9$)
                &4.8&21.0&90.4&100.0\\
                &\LinBreg{} ($\beta=0.9$) + thresh. 
                &3.6&17.4&90.0&99.9\\
                &\AdaBreg{} 
                &14.7&56.7&92.3&100.0\\
                &\AdaBreg{} + thresh. 
                &9.2&42.2&90.5&99.9\\
\end{tabularx}
\caption{Sparsity levels and accuracies on the CIFAR-10 data set.}\label{tab:CIFAR}
\end{table}

\subsection{Towards Architecture Design: Unveiling an Autoencoder}\label{sec:autoencoder}

In this final experiment we investigate the potential of our Bregman training framework for architecture design, which refers to letting the network learn its own architecture.
\citet{hoefler2021sparsity} identified this as one of the main potentials of sparse training.

The inverse scale space character of our approach turns out to be promising for starting with very sparse networks and letting the network choose its own architecture by enforcing, e.g., row sparsity of weight matrices.
The simplest yet widely used non-trivial architecture one might hope to train is an autoencoder, as used, e.g., for denoising images.
To this end, we utilize the MNIST data set and train a fully connected feedforward network with five hidden layers, all having the same dimension as the input layer, to denoise MNIST images. \revision{Similar to the experiments in \cref{sec:comparison} we split the dataset in 55,000 images used for training and 5,000 images to evaluate the performance.} 
We enforce row sparsity by using the regularizer \labelcref{eq:1-2_norm_ffwd}, which in this context is equivalent to having few active neurons in the network.

\cref{fig:encoder} shows the number of active neurons in the network at different stages of the training process using \LinBreg{}.
Here darker colors indicate more iterations.
We initialized around $1\%$ of all rows non-zero, which corresponds to 8 neurons per layer being initially active.
Our algorithm successively adds neurons and converges to an autoencoder-like structure, where the number \revision{of neurons} decreases until the middle layer and then increases again.
Note the network developed this structure ``on its own'' and that we \emph{did not} artificially generate this result by using different regularization strengths for the different layers.
In \cref{fig:denoised_images} we additionally show the denoising performance of the trained network on some images from the MNIST test set. \revision{The network was trained for 100 iterations, using a regularization value of $\lambda=0.07$ in \labelcref{eq:1-2_norm_ffwd} and employing a standard MSE loss. We evaluate the test performance using the established structural similarity index measure (SSIM) \citep{Wang04imagequality}, which assigns values close to 1 to pairs of images that are perceptually similar. Averaging this value over the whole test set we report a value of $\text{SSIM}\approx 0.93$. For comparison, we also trained a network with 100 iterations of standard SGD which yields no sparsity and a value of $\text{SSIM} \approx 0.89$.
}

\begin{figure}[htp]
\centering
\includegraphics[width=\textwidth,
                 trim=0.6cm 0.2cm 0.2cm 0.3cm,clip]{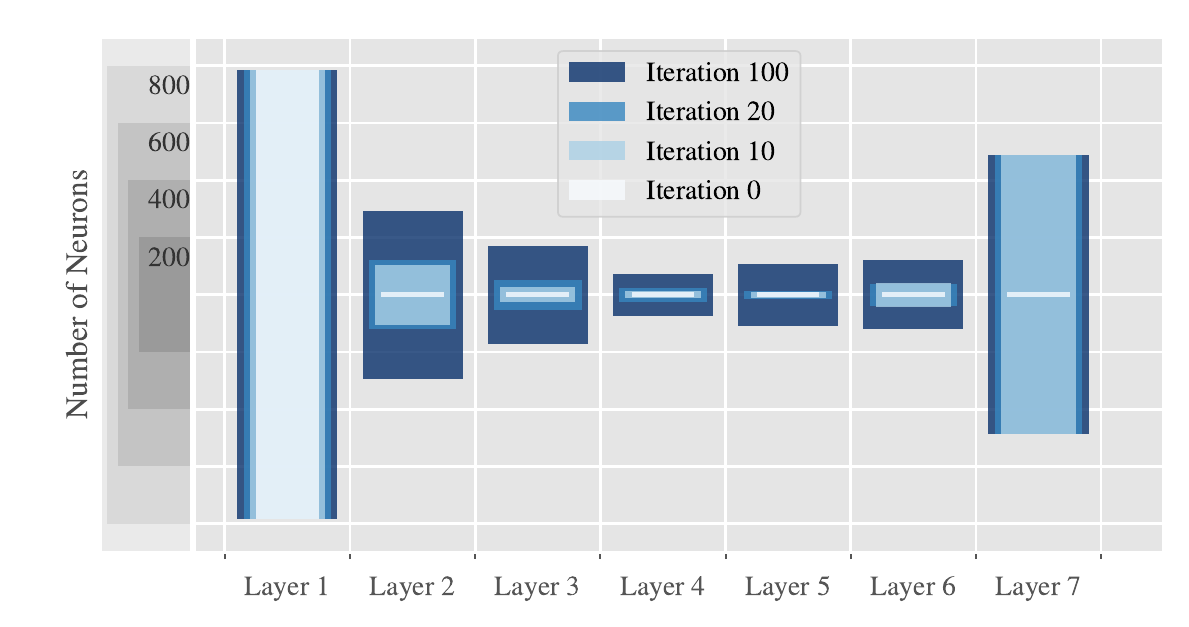}
\caption{Architecture design for denoising: \LinBreg{} automatically unveils an autoencoder.}
\label{fig:encoder}
\end{figure}

\begin{figure}[htb]
\centering
\includegraphics[width=\textwidth,
                 trim=0.3cm 0.3cm 0.2cm 0.3cm,clip]{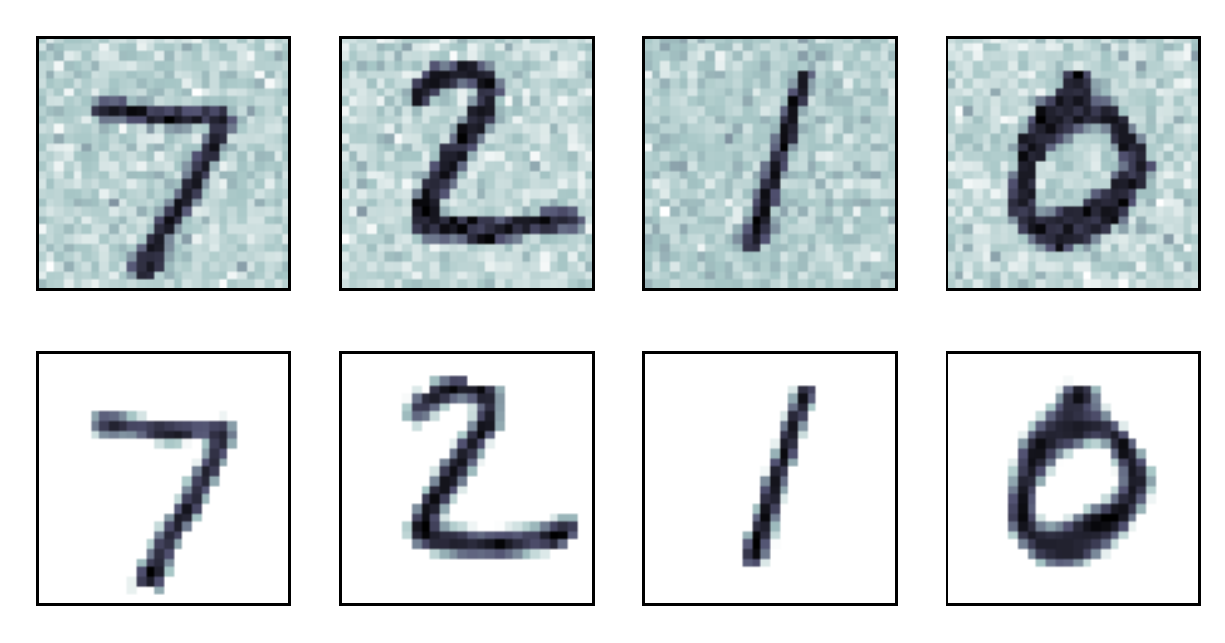}
\caption{The denoising performance of the trained autoencoder on the test set \revision{with an average SSIM value of $\approx0.93$.}}
\label{fig:denoised_images}
\end{figure}

\section{Conclusion}\label{sec:conclusion}

In this paper we proposed an inverse scale space approach for training sparse neural networks based on linearized Bregman iterations.
We introduced \emph{LinBreg} as baseline algorithm for our learning framework and also discuss two variants using momentum and Adam.
The effect of incorporating Bregman iterations into neural network training was investigated in numerical experiments on benchmark data sets.
Our observations showed that the proposed method is able to train very sparse and accurate neural networks in an inverse scale space manner without using additional heuristics. 
Furthermore, we gave a glimpse of its applicability for discovering suitable network architectures for a given application task, e.g., an autoencoder architecture for image denoising.
We mathematically supported our findings by performing a stochastic convergence analysis of the loss decay, and we proved convergence of the parameters in the case of convexity.

The proposed Bregman learning framework has a lot of potential for training sparse neural networks, and there are still a few open research questions (see also \citet{hoefler2021sparsity}) which we would like to emphasize in the following.

First, we would like to use the inverse scale space character of the proposed Bregman learning algorithms in combination with sparse backpropagation for resource-friendly training, hence improving the carbon footprint of training~\citep{anthony2020carbontracker}.
This is \revision{a} non-trivial endeavour for the following reason: A-priori it is not clear which weights are worth updating since estimating the magnitude of the gradient with respect to these weights already requires evaluating the backpropagation.
A possible way to achieve this consists in performing a Bregman step to obtain a sparse support of the weights, performing several masked backpropagation steps to optimize the weights in these positions, and alternate this procedure.

Second, our experiment from \cref{sec:autoencoder}, where our algorithm discovered a denoising autoencoder, suggests that our method has great potential for general architecture design tasks. 
Using suitable sparsity regularization, e.g., on residual connections and rows of the weight matrices, one can investigate whether networks learn to form a U-net \citep{ronneberger2015u} structure for the solution of inverse problems.

\revision{On the analysis side}, it is worth investigating the convergence of \LinBreg{} in the fully non-convex setting based on the Kurdyka-\L ojasiewicz inequality and to extend these results to our accelerated algorithms \LinBreg{} with momentum and \AdaBreg{}.
\revision{Furthermore, it will be interesting to remove the bounded variance condition from \cref{ass:variance}, which is known to be possible for stochastic gradient descent if the batch losses satisfy \labelcref{ineq:L-smooth}, see \citet{lei2019stochastic}.
Finally, the characterizing the limit point of \LinBreg{} for non-strongly convex losses as minimizer which minimizes the Bregman distance to the initialization will be worthwhile.
}


\acks{This work was supported by the European Union's Horizon 2020 research and innovation programme under the Marie Sk\l odowska-Curie grant agreement No.~777826 (NoMADS) and by the German Ministry of Science and Technology (BMBF) under grant agreement No.~05M2020 (DELETO).
Additionally we thank for the financial support by the Cluster of Excellence ‘‘Engineering of Advanced Materials’’ (EAM) and the "Competence Unit for Scientific Computing" (CSC) at the University of Erlangen-Nürnberg (FAU).
LB acknowledges support by the Deutsche Forschungsgemeinschaft (DFG, German Research Foundation) under Germany's Excellence Strategy - GZ 2047/1, Projekt-ID 390685813.
}

\bibliography{bibliography}

\appendix

\section*{Appendix}

In all proofs we will use the abbreviation $g^{(k)}:=g(\param^{(k)};\omega^{(k)})$ as in \labelcref{eq:stochlinbreg}.

\section{Proofs from \texorpdfstring{\cref{sec:loss_decay}}{Section 3.1}}

\begin{proof}[Proof of \cref{thm:decreasing_loss}]
Using \labelcref{ineq:L-smooth} one obtains
\begin{align*}
    &\quad\empLoss(\param^{(k+1)}) - \empLoss(\param^{(k)}) \\
    &\leq  \langle\nabla\empLoss(\param^{(k)}),\param^{(k+1)}-\param^{(k)}\rangle + \frac{L}{2}\norm{\param^{(k+1)}-\param^{(k)}}^2 \\
    &= \langle g^{(k)},\param^{(k+1)}-\param^{(k)}) + \langle\nabla\empLoss(\param^{(k)}) - g^{(k)},\param^{(k+1)}-\param^{(k)}\rangle +\frac{L}{2}\norm{\param^{(k+1)}-\param^{(k)}}^2 \\
    &\revision{=} -\frac{1}{\tau^{(k)}} \langle v^{(k+1)}-v^{(k)} , \param^{(k+1)}-\param^{(k)}\rangle \\
    &\qquad+ \langle\nabla\empLoss(\param^{(k)}) - g^{(k)},\param^{(k+1)}-\param^{(k)}\rangle
    + \frac{L}{2}\norm{\param^{(k+1)}-\param^{(k)}}^2 \\
    &= - \frac{1}{\tau^{(k)}} D_\func^{\mathrm{sym}}(\param^{(k+1)},\param^{(k)})-\frac{1}{\delta\tau^{(k)}}\norm{\param^{(k+1)}-\param^{(k)}}^2 \\
    &\qquad + \langle\nabla\empLoss(\param^{(k)}) - g^{(k)},\param^{(k+1)}-\param^{(k)}\rangle + \frac{L}{2}\norm{\param^{(k+1)}-\param^{(k)}}^2.
\end{align*}
Reordering and using \revision{the Cauchy-Schwarz} inequality yields
\begin{align*}
    \empLoss(\param^{(k+1)}) - \empLoss(\param^{(k)}) + \frac{1}{\tau^{(k)}} D_\func^{\mathrm{sym}}(\param^{(k+1)},\param^{(k)}) + &\frac{2 - L\delta\tau^{(k)}}{2\delta\tau^{(k)}}\norm{\param^{(k+1)}-\param^{(k)}}^2 \leq\\
    &\norm{\nabla\empLoss(\param^{(k)})-g^{(k)}}\norm{\param^{(k+1)}-\param^{(k)}}. 
\end{align*}
Taking expectations and using Young's inequality gives for any $c>0$
\begin{align*}
    \Exp{\empLoss(\param^{(k+1)})} &- \Exp{\empLoss(\param^{(k)})} + \frac{1}{\tau^{(k)}} \Exp{D_\func^{\mathrm{sym}}(\param^{(k+1)},\param^{(k)})} \\
    &+ \frac{2 - L\delta\tau^{(k)}}{2\delta\tau^{(k)}}\Exp{\norm{\param^{(k+1)}-\param^{(k)}}^2} \leq
    \tau^{(k)}\delta\frac{\sigma^2}{2c}+ \frac{c}{{2\delta\tau^{(k)}}}\Exp{\norm{\param^{(k+1)}-\param^{(k)}}^2}.
\end{align*}
If $c$ is sufficiently small and $\tau^{(k)}<\tfrac{2}{L\delta}$, we can absorb the last term into the left hand side and obtain
\begin{align*}
    \Exp{\empLoss(\param^{(k+1)})} - \Exp{\empLoss(\param^{(k)})} + \frac{1}{\tau^{(k)}} \Exp{D_\func^{\mathrm{sym}}(\param^{(k+1)},\param^{(k)})} + &\frac{C}{2\delta\tau^{(k)}}\Exp{\norm{\param^{(k+1)}-\param^{(k)}}^2} \\
    &\leq \tau^{(k)}\delta\frac{\sigma^2}{2c},
\end{align*}
where $C>0$ is a suitable constant.
This shows \labelcref{ineq:loss_decay}.
\end{proof}

\begin{proof}[Proof of \cref{cor:square_sum}]
Using the assumptions on $\tau^{(k)}$, we can multiply \labelcref{ineq:loss_decay} with $\tau^{(k)}$ and sum up the resulting inequality to obtain
\begin{align*}
    &\tau^{(K)}\Exp{\empLoss(\param^{(K)})} - \tau^{(0)}\Exp{\empLoss(\param^{(0)})} +\\
    &\quad\sum_{k=0}^{K-1}\left(\Exp{D_\func^{\mathrm{sym}}(\param^{(k+1)},\param^{(k)})} +
    \frac{C}{2\delta}\Exp{\norm{\param^{(k+1)}-\param^{(k)}}^2}\right)
    \leq  \delta\frac{\sigma^2}{2c}\sum_{k=0}^{K-1}(\tau^{(k)})^2
\end{align*}
Since $\empLoss\geq 0$ we can drop the first term. 
Sending $K\to\infty$ and using that the step sizes are square-summable concludes the proof.
\end{proof}

\section{Proof from \texorpdfstring{\cref{sec:cvgc_iterates}}{Section 3.2}}

The following lemma allows to split the Bregman distance with respect to the elastic net functional $\func_\delta$ into two Bregman distances with respect to the functional $\func$ and the Euclidean norm.
\begin{lemma}\label[lemma]{lem:sum_breg_dist}
For all $\tilde\param,\param\in\Param$ and $v\in\partial \func_\delta(\param)$ it holds that
\begin{align*}
    D_{\func_\delta}^v(\tilde\param,\param) = D_\func^\sg(\tilde\param,\param) + \frac{1}{2\delta}\norm{\tilde\param-\param}^2,
\end{align*}
\revision{where $\sg:= v - \frac{1}{\delta}\param \in \partial\func(\param)$.}
\end{lemma}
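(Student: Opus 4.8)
The plan is to reduce the claim to the subdifferential sum rule together with a single binomial identity; no analysis beyond convexity is needed, so most of the work is bookkeeping.

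First I would justify that $\sg := v - \frac{1}{\delta}\param$ is a legitimate subgradient of $\func$ at $\param$, which is what gives meaning to the term $D_\func^\sg(\tilde\param,\param)$ on the right-hand side. Since the quadratic summand $\param\mapsto\frac{1}{2\delta}\norm{\param}^2$ in the elastic net \labelcref{eq:elastic_net} is Fréchet differentiable everywhere with gradient $\frac{1}{\delta}\param$, the sum rule for subdifferentials applies and yields $\partial\func_\delta(\param) = \partial\func(\param) + \frac{1}{\delta}\param$. Hence $v\in\partial\func_\delta(\param)$ is equivalent to $\sg = v - \frac{1}{\delta}\param \in \partial\func(\param)$.

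Second, I would simply expand the definition \labelcref{eq:bregman_distance} of the Bregman distance for $\func_\delta$, inserting $\func_\delta = \func + \frac{1}{2\delta}\norm{\cdot}^2$ and $v = \sg + \frac{1}{\delta}\param$, so that the expansion separates into the $\func$-part and a purely quadratic remainder:
\begin{align*}
D_{\func_\delta}^v(\tilde\param,\param)
&= \func(\tilde\param) - \func(\param) - \langle\sg,\tilde\param-\param\rangle \\
&\quad + \frac{1}{2\delta}\norm{\tilde\param}^2 - \frac{1}{2\delta}\norm{\param}^2 - \frac{1}{\delta}\langle\param,\tilde\param-\param\rangle.
\end{align*}
The first line is exactly $D_\func^\sg(\tilde\param,\param)$. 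For the second line I would use $\langle\param,\tilde\param-\param\rangle = \langle\param,\tilde\param\rangle - \norm{\param}^2$ and the binomial identity $\norm{\tilde\param-\param}^2 = \norm{\tilde\param}^2 - 2\langle\param,\tilde\param\rangle + \norm{\param}^2$ to recognize the quadratic remainder as $\frac{1}{2\delta}\norm{\tilde\param-\param}^2$, which combined with the first line gives the claim.

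The only genuinely delicate point is the subdifferential sum rule in the (possibly infinite-dimensional) Hilbert space $\Param$. Because one summand is finite and continuous everywhere, no constraint qualification is required and the identity $\partial\func_\delta = \partial\func + \frac{1}{\delta}\,\mathrm{Id}$ holds unconditionally; everything after that is elementary algebra. I therefore expect no real obstacle beyond citing this standard fact, which is consistent with the mild \cref{ass:regularizer} imposed on $\func$.
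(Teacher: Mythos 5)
Your proposal is correct and follows essentially the same route as the paper's proof: write $v=\sg+\frac{1}{\delta}\param$ via the subdifferential sum rule, expand the definition of $D_{\func_\delta}^v$, split off $D_\func^\sg$, and collapse the quadratic remainder with the binomial identity. The only difference is that you explicitly justify the sum rule (no constraint qualification needed since the quadratic term is everywhere Fr\'echet differentiable), a point the paper's proof takes for granted.
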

\begin{proof}
Since $\partial \func_\delta(\param) = \partial \func(\param) + \partial\frac{1}{2\delta}\norm{\param}^2$, we can write $v=\sg+\frac{1}{\delta}\param$ with $\sg\in\partial \func(\param)$.
This readily yields
\begin{align*}
    D_{\func_\delta}^v(\tilde\param,\param) 
    &= \func_\delta(\tilde\param) - \func_\delta(\param) - \langle v,\tilde\param-\param\rangle\\ 
    &= \func(\tilde\param) + \frac{1}{2\delta}\norm{\tilde\param}^2 - \func(\param) - \frac{1}{2\delta}\norm{\param}^2 - \langle \sg, \tilde\param-\param\rangle - \frac{1}{\delta}\langle\param,\tilde\param-\param\rangle\\
    &= D^\sg_\func(\tilde\param,\param) + \frac{1}{2\delta}\norm{\tilde\param}^2 - \frac{1}{2\delta}\norm{\param}^2 - \frac{1}{\delta}\langle\param,\tilde\param\rangle + \frac{1}{\delta}\norm{\param}^2\\
    &= D_\func^\sg(\tilde\param,\param) + \frac{1}{2\delta}\norm{\tilde\param-\param}^2.
\end{align*}
\end{proof}

The next lemma expresses the difference of two subsequent Bregman distances along the iteration in two different ways. 
The first one is useful for proving convergence of \labelcref{eq:stochlinbreg} under the weaker convexity \cref{ass:mu-convex}, whereas the second one is used for proving the stronger convergence statement \cref{thm:cvgc_breg_dist} under \cref{ass:breg-convex}.

\begin{lemma}\label[lemma]{lem:iterates}
Denoting $d_k:=\E\left[D_{\func_\delta}^{v^{(k)}}(\param^*,\param^{(k)})\right]$ the iteration \labelcref{eq:stochlinbreg} fulfills:
\begin{align}
    \label{eq:breg_diff_1}
    d_{k+1} - d_k &= 
    -\E\left[D_{\func_\delta}^{v^{(k)}}(\param^{(k+1)},\param^{(k)})\right] + \tau^{(k)}\E\left[\langle g^{(k)},\param^*-\param^{(k+1)}\rangle\right], \\
    \label{eq:breg_diff_2}
    d_{k+1} - d_k &= 
    \E\left[D_{\func_\delta}^{v^{(k+1)}}(\param^{(k)},\param^{(k+1)})\right] + \tau^{(k)}\E\left[\langle\nabla \mathcal L(\param^{(k)}),\param^*-\param^{(k)}\rangle\right].
\end{align}
\end{lemma}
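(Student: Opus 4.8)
The plan is to prove both identities as purely algebraic rearrangements of the telescoped difference $D_{\func_\delta}^{v^{(k+1)}}(\param^*,\param^{(k+1)}) - D_{\func_\delta}^{v^{(k)}}(\param^*,\param^{(k)})$, using only the definition \labelcref{eq:bregman_distance} of the Bregman distance, the update $v^{(k+1)} = v^{(k)} - \tau^{(k)} g^{(k)}$, and the optimality condition \labelcref{eq:v_subgrad} which guarantees $v^{(k+1)}\in\partial\func_\delta(\param^{(k+1)})$ (so that the Bregman distances are well-defined). First I would expand both distances by definition; the $\func_\delta(\param^*)$ terms cancel and the difference collapses to
\begin{align*}
\func_\delta(\param^{(k)}) - \func_\delta(\param^{(k+1)}) - \langle v^{(k+1)},\param^*-\param^{(k+1)}\rangle + \langle v^{(k)},\param^*-\param^{(k)}\rangle.
\end{align*}
The single algebraic ingredient driving everything is the substitution $\tau^{(k)} g^{(k)} = v^{(k)} - v^{(k+1)}$, which trades the gradient term for a difference of subgradients and back.

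For identity \labelcref{eq:breg_diff_1}, I would start from the target right-hand side, write out $D_{\func_\delta}^{v^{(k)}}(\param^{(k+1)},\param^{(k)})$ by definition, and rewrite $\tau^{(k)}\langle g^{(k)},\param^*-\param^{(k+1)}\rangle$ as $\langle v^{(k)}-v^{(k+1)},\param^*-\param^{(k+1)}\rangle$. Collecting the $v^{(k)}$ terms through the elementary identity $\langle v^{(k)},(\param^{(k+1)}-\param^{(k)})+(\param^*-\param^{(k+1)})\rangle = \langle v^{(k)},\param^*-\param^{(k)}\rangle$ then reproduces the telescoped difference above exactly. Crucially, this identity holds \emph{pathwise} (for every fixed $\omega$), so taking $\E[\cdot]$ is immediate and no probabilistic argument is required.

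For identity \labelcref{eq:breg_diff_2}, the same bookkeeping applied to $D_{\func_\delta}^{v^{(k+1)}}(\param^{(k)},\param^{(k+1)})$, together with $\tau^{(k)}\langle g^{(k)},\param^*-\param^{(k)}\rangle = \langle v^{(k)}-v^{(k+1)},\param^*-\param^{(k)}\rangle$, again recovers the telescoped difference, so the pathwise statement reads
\begin{align*}
D_{\func_\delta}^{v^{(k+1)}}(\param^*,\param^{(k+1)}) - D_{\func_\delta}^{v^{(k)}}(\param^*,\param^{(k)}) = D_{\func_\delta}^{v^{(k+1)}}(\param^{(k)},\param^{(k+1)}) + \tau^{(k)}\langle g^{(k)},\param^*-\param^{(k)}\rangle.
\end{align*}
The remaining and only genuinely stochastic step is to replace $g^{(k)}$ by $\nabla\empLoss(\param^{(k)})$ inside the expectation. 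I would condition on the $\sigma$-algebra $\mathcal{F}_k$ generated by $\omega^{(0)},\dots,\omega^{(k-1)}$: since $\param^{(k)}$ and $\param^*$ are $\mathcal{F}_k$-measurable while $g^{(k)}=g(\param^{(k)};\omega^{(k)})$ is an unbiased estimator of $\nabla\empLoss(\param^{(k)})$, the tower property gives $\Exp{\langle g^{(k)},\param^*-\param^{(k)}\rangle} = \Exp{\langle \nabla\empLoss(\param^{(k)}),\param^*-\param^{(k)}\rangle}$, yielding \labelcref{eq:breg_diff_2}.

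The main subtlety — and the reason the two identities are not interchangeable — is exactly this measurability point. In \labelcref{eq:breg_diff_1} the gradient is paired against $\param^*-\param^{(k+1)}$, and $\param^{(k+1)}$ depends on $\omega^{(k)}$ through the proximal step, so $g^{(k)}$ and $\param^*-\param^{(k+1)}$ are correlated and the stochastic gradient \emph{cannot} be replaced by its conditional mean; this forces \labelcref{eq:breg_diff_1} to retain $g^{(k)}$. In \labelcref{eq:breg_diff_2}, by contrast, the gradient is paired against the $\mathcal{F}_k$-measurable increment $\param^*-\param^{(k)}$, which is precisely what licenses the deterministic gradient. Beyond recognizing this distinction, the computation is routine inner-product bookkeeping, so I expect no real obstacle once the correct pairings are identified.
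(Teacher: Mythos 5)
Your proposal is correct and follows essentially the same route as the paper: a pathwise algebraic expansion of the telescoped difference $D_{\func_\delta}^{v^{(k+1)}}(\param^*,\param^{(k+1)})-D_{\func_\delta}^{v^{(k)}}(\param^*,\param^{(k)})$ using the update $\tau^{(k)}g^{(k)}=v^{(k)}-v^{(k+1)}$, followed by taking expectations, with the stochastic gradient replaced by $\nabla\empLoss(\param^{(k)})$ only in the second identity. Your conditioning argument via the tower property on $\mathcal{F}_k$ is in fact a slightly more careful rendering of the paper's appeal to stochastic independence, and you correctly flag the same measurability obstruction (dependence of $\param^{(k+1)}$ on $\omega^{(k)}$) that prevents this replacement in \labelcref{eq:breg_diff_1}.
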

\begin{proof}
We compute using the update in \eqref{eq:stochlinbreg}
\begin{align*}
    &\phantom{=} D^{v^{(k+1)}}_{\func_\delta}(\param^*,\param^{(k+1)})-D^{v^{(k)}}_{\func_\delta}(\param^*,\param^{(k)})\\
    &= \func_\delta(\param^{(k)})-\func_\delta(\param^{(k+1)}) - \langle v^{(k+1)},\param^*-\param^{(k+1)}\rangle + \langle v^{(k)},\param^*-\param^{(k)}\rangle \\
    &= -\left(\func_\delta(\param^{(k+1)}) - \func_\delta(\param^{(k)}) - \langle v^{(k)}, \param^{(k+1)} - \param^{(k)} \rangle\right) - \langle v^{(k)}, \param^{(k+1)} -\param^{(k)} \rangle \\
    &\qquad\qquad - \langle v^{(k+1)},\param^*-\param^{(k+1)}\rangle + \langle v^{(k)},\param^*-\param^{(k)}\rangle \\
    &= -D_{\func_\delta}^{v^{(k)}}(\param^{(k+1)},\param^{(k)}) + \langle v^{(k)}-v^{(k+1)}, \param^* - \param^{(k+1)} \rangle \\
    &= -D_{\func_\delta}^{v^{(k)}}(\param^{(k+1)},\param^{(k)}) + \tau^{(k)} \langle g^{(k)}, \param^* - \param^{(k+1)} \rangle.
\end{align*}
For the second equation we similarly compute 
\begin{align*}
    &\phantom{=} D^{v^{(k+1)}}_{\func_\delta}(\param^*,\param^{(k+1)})-D^{v^{(k)}}_{\func_\delta}(\param^*,\param^{(k)})\\
    &= \func_\delta(\param^{(k)})-\func_\delta(\param^{(k+1)}) - \langle v^{(k+1)},\param^*-\param^{(k+1)}\rangle + \langle v^{(k)},\param^*-\param^{(k)}\rangle \\
    &= \func_\delta(\param^{(k)})-\func_\delta(\param^{(k+1)}) - \langle v^{(k+1)},\param^*-\param^{(k+1)}\rangle \\
    &\qquad\qquad + \langle v^{(k+1)},\param^*-\param^{(k)}\rangle + \tau^{(k)} \langle g^{(k)},\param^*-\param^{(k)}\rangle \\
    &= \func_\delta(\param^{(k)})-\func_\delta(\param^{(k+1)}) - \langle v^{(k+1)},\param^{(k)}-\param^{(k+1)}\rangle + \tau^{(k)}\langle g^{(k)},\param^*-\param^{(k)}\rangle \\
    &= D_{\func_\delta}^{v^{(k+1)}}(\param^{(k)},\param^{(k+1)}) + \tau^{(k)}\langle g^{(k)},\param^*-\param^{(k)}\rangle.
\end{align*}
Taking expectations and \revision{using that $g^{(k)}$ and $\param^* - \param^{(k)}$ are stochastically independent} to replace $g^{(k)}$ with $\nabla\mathcal{L}(\param^{(k)})$ inside the expectation concludes the proof.
Note that this argument does not apply to the first equality since $\param^{(k+1)}$ is \emph{not stochastically independent} of $\param^{(k)}$.
\end{proof}

Now we prove \cref{thm:cvgc_norm} by showing that the Bregman distance to the minimizer of the loss and that the \revision{iterates converge in norm to the minimizer}.

\begin{proof}[Proof of \cref{thm:cvgc_norm}]
Using \labelcref{ineq:gradLip},
\cref{ass:mu-convex}, and Young's inequality we obtain for any $c>0$
\begin{align*}
    &\phantom{=}\langle \nabla\empLoss(\param^{(k)}), \param^* - \param^{(k+1)}\rangle \\
    &=\langle \nabla\empLoss(\param^{(k+1)}), \param^* - \param^{(k+1)}\rangle + \langle \nabla\empLoss(\param^{(k)})-\nabla\empLoss(\param^{(k+1)}), \param^* - \param^{(k+1)}\rangle \\
    &\leq \empLoss(\param^*) - \empLoss(\param^{(k+1)}) - \frac{\mu}{2}\norm{\param^*-\param^{(k+1)}}^2 + \frac{L^2}{2c}\norm{\param^{(k)}-\param^{(k+1)}}^2 + \frac{c}{2}\norm{\param^*-\param^{(k+1)}}^2.
\end{align*}
Using that $\empLoss(\param^*)\leq\empLoss(\param^{(k+1)})$ we obtain
\begin{align*}
    \langle \nabla\empLoss(\param^{(k)}), \param^* - \param^{(k+1)}\rangle \leq \frac{L^2}{2c}\norm{\param^{(k)}-\param^{(k+1)}}^2 + \frac{c-\mu}{2}\norm{\param^*-\param^{(k+1)}}^2.
\end{align*}
Plugging this into the expression \labelcref{eq:breg_diff_1} for $d_{k+1}-d_k$ yields
\begin{align*}
d_{k+1}-d_k &=
-\E\left[D_{\func_\delta}^{v^{(k)}}(\param^{(k+1)},\param^{(k)})\right] + \tau^{(k)}
\Exp{\langle \nabla\empLoss(\param^{(k)}), \param^* - \param^{(k+1)}\rangle}\\
&\qquad\qquad + \tau^{(k)}\Exp{\langle g^{(k)}-\nabla\empLoss(\param^{(k)}), \param^* - \param^{(k+1)}\rangle} \\
&\leq -\E\left[D_{\func_\delta}^{v^{(k)}}(\param^{(k+1)},\param^{(k)})\right] + \tau^{(k)}\frac{L^2}{2c}\E\left[\norm{\param^{(k)}-\param^{(k+1)}}^2\right] \\
&\qquad\qquad + \frac{c-\mu}{2}\tau^{(k)}\E\left[\norm{\param^*-\param^{(k+1)}}^2\right] \\
&\qquad\qquad +\tau^{(k)}\Exp{\langle g^{(k)}-\nabla\empLoss(\param^{(k)}), \param^* - \param^{(k+1)}\rangle}.
\end{align*}
Now we utilize that $\param^*-\param^{(k)}$ and $g^{(k)}-\nabla\empLoss(\param^{(k)})$ are stochastically independent and that the latter has zero expectation to infer
\begin{align*}
    &\phantom{=}\Exp{\langle g^{(k)}-\nabla\empLoss(\param^{(k)}), \param^* - \param^{(k+1)}\rangle}  \\
    &= \Exp{\langle g^{(k)}-\nabla\empLoss(\param^{(k)}),  \param^{(k)} - \param^{(k+1)}\rangle} \\
    &\qquad\qquad + \Exp{\langle g^{(k)}-\nabla\empLoss(\param^{(k)}),  \param^* - \param^{(k)}\rangle} \\
    &= \Exp{\langle g^{(k)}-\nabla\empLoss(\param^{(k)}),  \param^{(k)} - \param^{(k+1)}\rangle}.
\end{align*}
Applying Young's inequality and using \cref{ass:variance} yields
\begin{align*}
  &\phantom{\leq} \tau^{(k)}  \Exp{\langle g^{(k)}-\nabla\empLoss(\param^{(k)}), \param^{(k)} - \param^{(k+1)}\rangle} \\
  &\leq  \frac{\delta(\tau^{(k)})^2}2 \sigma + \frac{\sigma}{2\delta}\Exp{\norm{\param^{(k)} - \param^{(k+1)}}^2}.
\end{align*}
Plugging this into the expression for $d_{k+1}-d_k$ and reordering we infer that for \revision{$\tau^{(k)}\leq\frac{\mu}{2\delta L^2}$} and $c=\mu/2$ it holds
\begin{align*}
&\phantom{\leq} d_{k+1} - d_k + \frac{\mu}{4}\tau^{(k)}\E\left[\norm{\param^*-\param^{(k+1)}}^2\right] \\
&\leq -\E\left[\revision{D_{\func_\delta}^{v^{(k)}}}(\param^{(k+1)},\param^{(k)})\right]  
+\frac{\delta(\tau^{(k)})^2}2 \sigma + \frac{\sigma}{2\delta}\Exp{\norm{\param^{(k)} - \param^{(k+1)}}^2},
\end{align*}
\revision{which yields \labelcref{ineq:decay_bregman_distance}}.
Summing this inequality, using \cref{cor:square_sum}\revision{---which is possible since $\tau^{(k)}\leq\frac{\mu}{2\delta L^2}\leq \frac{1}{2\delta L}\leq \frac{2}{\delta L}$---}, and that the step sizes are square-summable then shows that
\begin{align*}
    \sum_{k=0}^\infty\tau^{(k)}\E\left[\norm{\param^*-\param^{(k+1)}}^2\right] < \infty.
\end{align*}
Since $\sum_{k=0}^\infty \tau^{(k)}=\infty$ this means that 
\begin{align*}
    \min_{k\in\{1,\dots,K\}}\E\left[\norm{\param^*-\param^{(k)}}^2\right]\to 0,\quad K\to\infty.
\end{align*}
Hence, for an appropriate subsequence $\param^{(k_j)}$ it holds
\begin{align*}
    \lim_{j\to\infty}\E\left[\norm{\param^*-\param^{(k_j)}}^2\right]=0.
\end{align*}
\end{proof}

\begin{proof}[Proof of \cref{cor:convergence_finite_d}]
Since $\func$ is equal to the $\ell_1$-norm, $\func$ admits the triangle inequality $\func(\param^*)-\func(\param^{(k)})\leq\func(\param^*-\param^{(k)})$.
Furthermore, since $d:=\dim\Param$ is finite, it admits the norm inequality $\func(\param)\leq \sqrt{d}\norm{\param}$ and has bounded subgradients $\norm{\sg}=\norm{\sign(\param)}\leq d$ for all $\param\in\Param$.

Hence, using \cref{lem:sum_breg_dist} we can estimate
\begin{align*}
    d_k &= \Exp{D_{\func_\delta}^{v^{(k)}}(\param^*,\param^{(k)})} = \Exp{D_\func^{\sg^{(k)}}(\param^*,\param^{(k)})} + \frac{1}{2\delta}\Exp{\norm{\param^*-\param^{(k)}}^2}\\
    &=\Exp{\func(\param^*)-\func(\param^{(k)})-\langle\sg^{(k)},\param^*-\param^{(k)}\rangle}+ \frac{1}{2\delta}\Exp{\norm{\param^*-\param^{(k)}}^2}\\
    &\leq (\sqrt{d}+d)\Exp{\norm{\param^*-\param^{(k)}}}+ \frac{1}{2\delta}\Exp{\norm{\param^*-\param^{(k)}}^2}.
\end{align*}
Hence, since $\Exp{\norm{\param^*-\param^{(k_j)}}^2}$ converges to zero according to \cref{thm:cvgc_norm}, the same is true for the sequence $d_{k_j}$.
Furthermore, we have proved that $d_{k+1}-d_k\leq c_k$, where $c_k$ is a non-negative and summable sequence.
This also implies $d_m \leq d_k + \sum_{j=k}^\infty c_k$ for every $m>k$.

Since $c_k$ is summable and $d_{k_j}$ converges to zero there exists $k\in\N$ and $l\in\N$ such that
\begin{align*}
    \sum_{j=k}^\infty c_j < \frac{\varepsilon}{2},\quad
    d_{k_l} < \frac{\varepsilon}{2},\quad
    k_l > k.
\end{align*}
Hence, we obtain for any $m>k_l$
\begin{align*}
    d_m \leq d_{k_l} + \sum_{j=k_l}^\infty c_k \leq
    d_{k_l} + \sum_{j=k}^\infty c_k < \varepsilon.
\end{align*}
Since $\varepsilon>0$ was arbitrary, this implies that $d_m\to 0$ as $m\to\infty$.
\end{proof}

Finally we prove \cref{thm:cvgc_breg_dist} based on \cref{ass:breg-convex}, which asserts convergence in the Bregman distance.

\revision{
\begin{proof}[Proof of \cref{thm:cvgc_breg_dist}]
The proof goes along the lines of \citet{turinici2021convergence}, however \cref{ass:variance} is weaker than the assumption posed there.
\\
\textbf{Item 1:} 
Since proximal operators are 1-Lipschitz it holds 
$$\norm{\param^{(k+1)}-\param^{(k)}}=\norm{\prox{\delta\func}(\delta v^{(k+1)}) - \prox{\delta\func}(\delta v^{(k)})}\leq\delta\norm{v^{(k+1)}-v^{(k)}}.$$ 
Using the Cauchy-Schwarz inequality and this estimate yields
\begin{align*}
    \frac{1}{\tau^{(k)}}\E\left[D_{\func_\delta}^{v^{(k+1)}}(\param^{(k)},\param^{(k+1)})\right]
    &\leq \frac{1}{\tau^{(k)}}\Exp{D_{\func_\delta}^\mathrm{sym}(\param^{(k)},\param^{(k+1)})} \\
    &=\frac{1}{\tau^{(k)}}\Exp{\langle v^{(k+1)}-v^{(k)}, \param^{(k+1)}-\param^{(k)}\rangle}\\
    &= -\Exp{\langle g^{(k)}, \param^{(k+1)}-\param^{(k)}\rangle} \\
    &\leq \sqrt{\Exp{\norm{g^{(k)}}^2}}\sqrt{\Exp{\norm{\param^{(k+1)}-\param^{(k)}}^2}} \\
    &\leq \delta \sqrt{\Exp{\norm{g^{(k)}}^2}} \sqrt{\Exp{\norm{v^{(k+1)}-v^{(k)}}^2}}\\
    &= \delta \tau^{(k)}\Exp{\norm{g^{(k)}}^2}.
\end{align*}
It can be easily seen that \cref{ass:variance} is equivalent to
\begin{align*}
    \Exp{\norm{g^{(k)}}^2} 
    \leq \sigma^2 + \Exp{\norm{\nabla\empLoss(\param^{(k)})}^2},
\end{align*}
which together with the Lipschitz continuity of $\nabla\empLoss$ from \cref{ass:loss} implies
\begin{align*}
    \Exp{D_{\func_\delta}^{v^{(k+1)}}(\param^{(k)},\param^{(k+1)})}
    &\leq \delta(\tau^{(k)})^2\left(\sigma^2 + \Exp{\norm{\nabla\empLoss(\param^{(k)})}^2}\right) \\
    &\leq \delta(\tau^{(k)})^2\left(\sigma^2 + L^2 \Exp{\norm{\param^*-\param^{(k)}}^2}\right) \\
    &\leq \delta(\tau^{(k)})^2\left(\sigma^2 + 2\delta L^2 d_k\right).
\end{align*}
We plug this estimate into \labelcref{eq:breg_diff_2} and utilize \cref{ass:breg-convex} to obtain
\begin{align*}
    d_{k+1} - d_k 
    &\leq \delta(\tau^{(k)})^2\left(\sigma^2 + 2\delta L^2 d_k\right) + \tau^{(k)}\E\left[\mathcal{L}(\param^*)-\mathcal{L}(\param^{(k)}) - \nu D_{\func_\delta}^{v^{(k)}}(\param^*,\param^{(k)})\right] \\
    &\leq \delta(\tau^{(k)})^2\left(\sigma^2 + 2\delta L^2 d_k\right) - \tau^{(k)}\nu\E\left[D_{\func_\delta}^{v^{(k)}}(\param^*,\param^{(k)})\right]\\
    &= \delta(\tau^{(k)})^2\left(\sigma^2 + 2\delta L^2 d_k\right) - \tau^{(k)}\nu d_k,
\end{align*}
which is equivalent to \eqref{eq:estimate_expects}.
\\
\textbf{Item 2:}
For $\tau^{(k)}$ sufficiently small \eqref{eq:estimate_expects} implies
\begin{align*}
    d_{k+1}\leq \left(1-\tau^{(k)}\tilde\nu\right)d_k + \delta(\tau^{(k)})^2\sigma^2
\end{align*}
for a constant $\tilde\nu\in(0,\nu)$.
For any $C\in\R$ we can reformulate this to
\begin{align}\label{eq:reform}
    d_{k+1} - C 
    \leq 
    (1-\tau^{(k)}\tilde\nu)(d_k - C) + \tau^{(k)} \left(\tau^{(k)}\delta \sigma^2 - \tilde\nu C\right).
\end{align}
If $\tau^{k}=\tau$ for all $k\in\N$ is constant and we choose $C=\tau\tfrac{\delta \sigma^2}{\tilde\nu}$, we obtain
\begin{align*}
    \left(d_{k+1} - \tau\tfrac{\delta \sigma^2}{\tilde\nu}\right)_+ \leq (1-\tau\tilde\nu)\left(d_k - \tau\tfrac{\delta \sigma^2}{\tilde\nu}\right)_+,
\end{align*}
where we passed to the positive part $x_+:=\max(x,0)$.
Iterating this inequality yields
\begin{align*}
    \left(d_{k+n} - \tau\tfrac{\delta \sigma^2}{\tilde\nu}\right)_+ \leq (1-\tau\tilde\nu)^n\left(d_k - \tau\tfrac{\delta \sigma^2}{\tilde\nu}\right)_+
\end{align*}
and hence for $\tau<\tfrac1{\tilde\nu}$ we get
\begin{align*}
    \limsup_{k\to\infty} d_k \leq \tau\tfrac{\delta \sigma^2}{\tilde\nu}.
\end{align*}
Demanding $\tau<\tfrac{\eps\tilde\nu}{\delta\sigma^2} \wedge \tfrac{1}{\tilde\nu}$ we finally obtain
\begin{align*}
    \limsup_{k\to\infty} d_k \leq \eps.
\end{align*}
\\
\textbf{Item 3:}
We use the reformulation \labelcref{eq:reform} with $C=\eps>0$.
Since $\tau^{(k)}$ converges to zero the second term is non-positive for $k\in\N$ sufficiently large and we obtain
\begin{align*}
    (d_{k+1}-\eps)_+ \leq (1-\tau^{(k)}\tilde\nu)(d_k-\eps)_+. 
\end{align*}
Iterating this inequality yields
\begin{align*}
    (d_{k+n}-\eps)_+ \leq \prod_{l=k}^{k+n-1}(1-\tau^{(l)}\tilde\nu)(d_k-\eps)_+.
\end{align*}
If $k\in\N$ is sufficiently large, then $\tau^{(l)}\tilde\nu < 1$ for all $l\geq k$ and the product satisfies
\begin{align*}
    \prod_{l=k}^{k+n-1}(1-\tau^{(l)}\tilde\nu) = \exp\left(\sum_{l=k}^{k+n-1}\log(1-\tau^{(l)}\tilde\nu)\right)\leq 
    \exp\left(-\sum_{l=k}^{k+n-1}\tau^{(l)}\tilde\nu\right)\to 0,\quad n\to\infty,
\end{align*}
where we used $\log(1-x)\leq-x$ for all $x<1$ and $\sum_k \tau^{(k)}=\infty$.
Since $\eps$ was arbitrary we obtain the assertion.
\end{proof}
}

\end{document}